\newcommand{\nn}{\nonumber}
\newcommand{\Tau}{\mathcal{T}}
\newcommand{\cost}{c_e}
\newcommand\norm[1]{\left\lVert#1\right\rVert_{1}}
\newcommand{\given}{{\,|\,}}
\newcommand{\hum}{\mathbb{H}}
\newcommand{\mac}{\mathbb{M}}
\newcommand{\ost}{\overset}
\newcommand{\xhdr}[1]{\vspace{1mm} \noindent{{\bf #1.}}}
\newcommand{\prb}{\text{Pr}}
\title{Learning to Switch Among Agents in a Team via $2$-Layer Markov Decision Processes}
\author{\name Vahid Balazadeh \email vahid@cs.toronto.edu \\
       \addr 
       University of Toronto
       \ANDX
       \name Abir De \email abir@cse.iitb.ac.in \\
       \addr Indian Institute of Technology Bombay
       \ANDX
       \name Adish Singla \email adishs@mpi-sws.dot.org\\
       \addr Max Planck Institute for Software Systems
       \ANDX 
       \name Manuel Gomez Rodriguez \email manuelgr@mpi-sws.org\\
       \addr Max Planck Institute for Software Systems}
\def\month{07}  
\def\year{2022} 
\begin{document}

\maketitle

\begin{abstract}
\looseness-1Reinforcement learning agents have been mostly de\-ve\-loped and evaluated un\-der the assumption that they will 
ope\-rate in a fully autonomous manner---they will take \emph{all} actions. 
In this work, our goal is to develop algorithms that, by learning to switch control between agents, allow existing 
reinforcement learning agents to operate under different automation le\-vels.
%
%
To this end, we first formally define the problem of learning to switch control among agents in a team via a 2-layer Markov decision 
process. 
Then, we develop an online learning algo\-rithm that uses upper confidence bounds on the agents'{} policies and the environment'{}s 
transition probabilities to find a sequence of switching policies.
The total regret of our algorithm with respect to the optimal swit\-ching policy is sublinear in the number of learning steps and, whenever
multiple teams of agents operate in a similar environment, our algorithm greatly benefits from maintaining shared confidence bounds for the 
environments'{} transition probabilities and it enjoys a better regret bound than problem-agnostic algorithms.
%
%
Si\-mu\-la\-tion experiments illustrate our theoretical fin\-dings and demonstrate that, by exploiting the 
specific structure of the problem, our proposed algorithm is superior to problem-agnostic algorithms.

\end{abstract}

\section{Introduction}
\label{sec:introduction}
In recent years, reinforcement learning (RL) agents have achieved, or even surpassed, human performance in a va\-rie\-ty of computer games by taking decisions 
autonomously, without human intervention~\citep{mnih2015human, silver2016mastering, silver2017mastering, vinyals2019grandmaster}.
Motivated by these successful stories, there has been a tremendous excitement on the possibility of using RL agents to operate fully autonomous cyberphysical 
systems, especially in the context of autonomous driving.
Unfortunately, a number of technical, societal, and legal challenges have precluded this possibility to become so far a reality. 

In this work, we argue that existing RL agents may still enhance the operation of cyberphysical systems if deployed under lower automation levels. 
For example, if we let RL agents take some of the actions and leave the remaining ones to human agents, the resulting performance may be better 
than the performance either of them would achieve on their own~\citep{raghu2019algorithmic,de2020regression,wilder2020learning}.
Once we depart from full automation, we need to address the following question: when should we switch control between machine and human agents? 
%
In this work, we look into this problem from a theoretical perspective and develop an online algorithm that learns to optimally switch control between
multiple agents in a team automatically.
However, to fulfill this goal, we need to address several challenges: 
%
\vspace{1mm}

\noindent \hspace{-0.3mm} --- \emph{Level of automation.} In each application, what is considered an appropriate and tolerable load for 
each agent may differ~\citep{ec2006}. Therefore, we would like that our algorithms provide mechanisms to adjust the amount of control for each agent (\ie, level of automation) during a given time period.

\noindent \hspace{-0.3mm}  --- \emph{Number of switches.} Consider two different switching patterns resulting in the same amount of agent 
control and equivalent performance.
Then, we would like our algorithms to fa\-vor the pattern with the least number of switches. For example, in a team consisting of human and machine agents, every time a machine defers (takes) control 
to (from) a human, there is an additional cognitive load for the human~\citep{brookhuis2001behavioural}.

\noindent \hspace{-0.3mm}  --- \emph{Unknown agent policies.} The spectrum of human abi\-li\-ties spans a broad range~\citep{macadam2003understanding}. As 
a result, there is a wide variety of potential human policies. Here, we would like that our algorithms learn personalized switching policies that, over time, adapt 
to the particular humans (and machines) they are dealing with.

\noindent \hspace{-0.3mm} --- \emph{Disentangling agents' policies and environment dynamics.} 
We would like that our algorithms learn to disentangle the influence of the agents'{} policies and the environment dynamics on the switching policies.
By doing so, they could be used to efficiently find multiple personalized switching policies for different teams of agents operating in similar environments 
(\eg, multiple semi-autonomous vehicles with different human drivers).
\vspace{1mm}

\noindent To tackle the above challenges, we first formally define the problem of learning to switch control among agents in a team using a 2-layer Markov decision
process (Figure~\ref{fig:2layer}). 
Here, the team can be composed of any number of machines or human agents, and the agents' policies, as well as the transition probabilities of the environment,
may be unknown.
%
In our formulation, we assume that all agents follow Markovian policies\footnote{\scriptsize In certain cases, it is possible to convert a 
non-Markovian human policy into a Markovian one by changing the state representation~\citep{Daw2014}. Addressing the problem 
of learning to switch control among agents in a team in a semi-Markovian setting is left as a very interesting venue for future work.}, 
similarly as other theoretical models of human decision making~\citep{Townsend2000,Daw2014,McGhan2015}. 
Under this definition, the problem reduces to fin\-ding the switching policy that provides an optimal trade off between the environmental cost, the amount 
of agent control, and the number of switches. %
Then, we develop an online learning algo\-rithm, which we refer to as UCRL2-MC\footnote{\scriptsize UCRL2 with Multiple Confidence sets.}, that uses upper confidence bounds 
on the agents'{} policies and the transition probabilities of the environment to find a sequence of switching policies whose total regret with respect to the optimal switching po\-li\-cy is 
sublinear in the number of learning steps.
In addition, we also demonstrate that the same algorithm can be used to find multiple sequences of switching policies across several independent teams of agents operating in 
similar environments, where it greatly benefits from maintaining shared confidence bounds for the transition probabilities of the environments and enjoys a better regret bound 
than UCRL2, a very well known reinforcement learning algorithm that we view as the most natural competitor. 
%
%
%
%
%
%
Finally, we perform a variety of simulation experiments in the standard RiverSwim environment as well as an obstacle avoidance task, where we consider multiple teams of agents (drivers) composed by 
one human and one machine agent. 
Our results illustrate our theoretical findings and demonstrate that, by exploiting the specific structure of the problem, our proposed algorithm is superior
to problem-agnostic alternatives.

Before we proceed further, we would like to point out that, at a broader level, our methodology and theoretical results are applicable to the problem of 
switching control between agents following Markovian policies. As long as the agent policies are Markovian, our results do not distinguish between machine 
and human agents. 
In this context, we view teams of human and machine agents as one potential application of our work, which we use as a motivating example throughout the 
paper.
However, we would also like to acknowledge that a practical deployment of our methodology in a real application with human and machine agents would 
require considering a wide range of additional practical aspects (\eg, transparency, explainability, and visualization). 
Moreover, one may also need to explicitly model the difference in reaction times between human and machine agents.
Finally, there may be scenarios in which it might be beneficial to allow a human operator to switch control. 
Such considerations are out of the scope of our work.
%

%
%



\section{Related work}
%
One can think of applying existing RL algorithms~\citep{jaksch2010near,osband2013more,osband2014near,gopalan2015thompson},
such as UCRL2 or Rmax, to find swi\-tching policies. However, these problem-agnostic algorithms are unable to exploit the specific 
structure of our problem.
More specifically, our algorithm computes the confidence intervals separately over the agents'{} policies and the transition probabilities of the environment, 
instead of computing a single confidence interval, as problem-agnostic algorithms do. 
As a consequence, our algorithm learns to switch more efficiently across multiple teams of agents, as shown in Section~\ref{sec:experiments}.

There is a rapidly increasing line of work on learning to defer decisions in the machine learning li\-te\-ra\-ture~\citep{bartlett2008classification,cortes2016learning,geifman2018bias,ramaswamy2018consistent,geifman2019selectivenet,liu2019deep,raghu2019algorithmic,raghu2019direct,thulasidasan2019combating,de2020regression,de2020classification,sontag2020,wilder2020learning,shekhar2021active}. 
However, previous work has typically focused on supervised learning. More specifically, it has developed classifiers that learn to defer by considering the 
defer action as an addi\-tio\-nal label value, by training an independent classifier to decide about deferred decisions, or by reducing the problem to a combinatorial 
optimization problem.
Moreover, except for a few recent notable exceptions~\citep{raghu2019algorithmic, de2020regression, de2020classification,sontag2020, wilder2020learning}, they do 
not consider there is a human decision maker who takes a decision whenever the classifiers defer it. %
In contrast, we focus on reinforcement learning, and develop algorithms that learn to switch control between multiple agents, including human agents.
Recently, \citet{jacq2022lazy} introduced a new framework called lazy-MDPs to decide when to act optimally for reinforcement learning agents. They propose to augment existing MDPs with a new default action and encourage agents to defer decision-making to default policy in non-critical states. Though their lazy-MDP is similar to our augmented 2-layer MDP framework, our approach is designed to switch optimally between possibly multiple agents, each having its own policy.

\looseness-1
Our work is also connected to research on understanding switching behavior and switching costs in the context of human-computer interaction~\citep{sch1,sch2,sch3,sch4,sch5}, 
which has been sometimes referred to as ``adjustable autonomy''~\citep{mostafa2019adjustable}. 
At a technical level, our work advances state of the art in adjustable autonomy by introducing an algorithm with provable guarantees to efficiently find the optimal switching policy 
in a setting in which the dynamics of the environment and the agents'{} policies are unknown (\ie, there is uncertainty about them).
%
%
Moreover, our work also relates to a recent line of research that combines deep reinforcement learning with opponent modeling to robustly switch between multiple machine 
policies~\citep{everett2018learning,zheng2018deep}. However, this line of research does not consider the presence of human agents, and there are no theoretical guarantees on 
the performance of the proposed algorithms.

Furthermore, our work contributes to an extensive body of work on human-machine 
collaboration~\citep{stone2010ad, taylor2011integrating, walsh2011blending, barrett2012analysis, macindoe2012pomcop, torrey2013teaching, nikolaidis2015efficient, hadfield2016cooperative, nikolaidis2017mathematical, grover2018learning, haug2018teaching, reddy2018shared, wilson2018collaborative, brown2019machine, kamalaruban2019interactive, radanovic2019learning, tschiatschek2019learner, ghosh2020towards-aamas, strouse2021collaborating}.
However, rather than developing algorithms that learn to switch control between humans and machines, previous work has predominantly considered 
settings in which the machine and the human interact with each other. 

Finally, one can think of using option framework and the notion of macro-actions and micro-actions to formulate the problem of learning to switch~\citep{sutton1999between}. However, the option framework is designed to address different levels of temporal abstraction in RL by defining macro-actions that correspond to sub-tasks (skills). In our problem, each agent is not necessarily optimized to act for a specific task or sub-goal but for the whole environment/goal. Also, in our problem, we do not necessarily have control over all agents to learn the optimal policy for each agent, while in the option framework, a primary direction is to learn optimal options for each sub-task. In other words, even though we can mathematically refer to each agent policy as an option, they are not conceptually the same.


%
%

\section{Switching Control Among Agents as a 2-Layer MDP}
\label{sec:formulation}
%
Given a team of agents $\Dcal$, at each time step $t \in \{1, \ldots, L\}$, our (cyberphysical) system is cha\-rac\-te\-ri\-zed 
by a state $s_t \in \Scal$, where $\Scal$ is a finite state space, and a control switch $d_t \in \Dcal$, which determines who 
takes an action $a_t \in \Acal$, where $\Acal$ is a finite action space. 
In the above, the switch value is given by a (deterministic and time-varying) switching policy $d_t = \pi_t(s_t, d_{t-1})$\footnote{\scriptsize Note
that, by making the switching policy dependent on the previous switch value $d_{t-1}$, we can account for the switching cost.}. 
More specifically, if $d_t = d$, the action $a_t$ is sampled from the agent $d$'{}s policy $p_{d}(a_t \given s_t)$.
%
%
Moreover, given a state $s_t$ and an action $a_t$, the state $s_{t+1}$ is sampled from a transition pro\-ba\-bi\-li\-ty 
$p(s_{t+1} \given s_{t}, a_t)$.
Here, we assume that the agents'{} policies and the transition probabilities may be unknown.
%
Finally, given an initial state and switch value $(s_1, d_0)$ and a trajectory $\tau = \{ (s_t, d_t, a_t) \}_{t=1}^{L}$ of states, switch values and actions, we 
define the total cost $c(\tau \given s_1, d_0)$ as:
\begin{equation} \label{eq:cost}
c(\tau \given s_1, d_0)   = \sum_{t=1}^{L} [\cost(s_t,a_t) + c_c(d_t) + c_{x}(d_t, d_{t-1})],
\end{equation}
where $\cost(s_t, a_t)$ is the environment cost of taking action $a_t$ at state $s_t$,
$c_c(d_t)$ is the cost of giving control to agent $d_t$,
$c_{x}(d_t, d_{t-1})$ is the cost of switching from $d_{t-1}$ to $d_{t}$, 
and $L$ is the time horizon\footnote{\scriptsize The specific choice of environment cost $\cost(\cdot, \cdot)$, control cost $c_c(\cdot)$ and switching cost $c_{x}(\cdot, \cdot)$ is application dependent.}. 
%
Then, our goal is to find the optimal switching po\-li\-cy $\pi^{*} = (\pi^{*}_1, \ldots, \pi^{*}_{L})$ that minimizes the expected 
cost, \ie, 
\begin{equation} \label{eq:learning-to-switch}
\pi^{*} = \argmin_{\pi} \EE \left[ c(\tau \given s_1, d_0) \right], 
\end{equation}
where the expectation is taken over all the trajectories induced by the switching policy given the agents'{} policies.

%
To solve the above problem, one could just resort to problem-agnostic RL algorithms, such as UCRL2 or Rmax, over a standard Markov decision 
process (MDP), defined as $$\Mcal = (\Scal \times \Dcal, \Dcal, \bar{P}, \bar{C}, L),$$
where $\Scal \times \Dcal$ is an augmented state space, 
the set of actions $\Dcal$ is just the switch values,
the transition dynamics $\bar{P}$ at time $t$ are given by 
%
%
\begin{align}
p(s_{t+1}, d_{t} \given s_{t}, d_{t-1}) &= \II[\pi_t(s_t, d_{t-1})=d_t] \times \sum_{a \in \Acal} p(s_{t+1} \given s_{t}, a) p_{d_t}(a \given s_t), 
\label{eq:dynamics-standard-mdp}
\end{align}
the immediate cost $\bar{C}$ at time $t$ is given by
\begin{equation}
\bar{c}(s_t, d_{t-1}) = \EE_{a_t \, \sim \, p_{\pi_{t}(s_t, d_{t-1})}( \cdot \given s_t)}\left[\cost(s_t,a_t)\right]
+ c_c(\pi_{t}(s_t, d_{t-1})) + c_x(\pi_{t}(s_t, d_{t-1}), d_{t-1}).
\label{eq:cdefgen}
\end{equation}
%
Here, note that, 
%
by using conditional expectations, we can compute the ave\-ra\-ge cost of a trajectory, given by Eq.~\ref{eq:cost}, from the above immediate 
costs.
However, these algorithms would not exploit the structure of the problem. More specifically, they would not use the observed agents'{} actions to 
improve the estimation of the transition dynamics over time.

\begin{figure}[t]
\centering
\includegraphics[width=0.4\textwidth]{./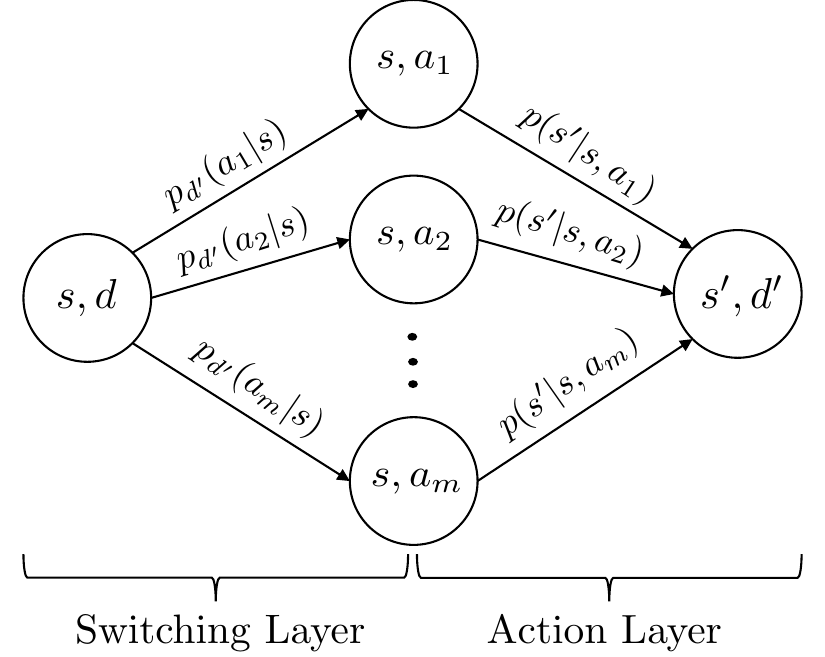}
\caption{Transitions of a 2-layer Markov Decision Process (MDP) from state $(s, d)$ to state $(s', d')$ after seleting agent $d'$. $d'$ and $d$ denote the current and previous agents in control. In the first layer (switching layer), the switching policy chooses agent $d'$, which takes action w.r.t. its action policy $p_{d'}$.  Then, in the action layer, the environment transitions to the next state $s'$ based on the taken action w.r.t. the transition probability $p$. }
\label{fig:2layer}
\end{figure}
To avoid the above shortcoming, we will resort instead to a 2-layer MDP 
%
where taking an action $d_t$ in state $(s_t, d_{t-1})$ leads first to an intermediate state $(s_t, a_t) \in \Scal \times \Acal$ with probability $p_{d_t}(a_t \given s_{t})$ 
and immediate cost $c_{d_t}(s_t, d_{t-1}) = c_{c}(d_t) + c_{x}(d_t, d_{t-1})$ and then to a final state $(s_{t+1}, d_t) \in \Scal \times \Dcal$ with probability 
$ \II[\pi_t(s_t, d_{t-1})=d_t]\cdot p(s_{t+1} \given s_{t}, a_{t})$ 
and immediate cost $\cost(s_t, a_t)$. 
More formally, the 2-layer MDP is defined by the following 8-tuple:
\begin{equation}
\Mcal = (\Scal \times \Dcal, \Scal \times \Acal, \Dcal, P_{\Dcal}, P, C_{\Dcal}, C_{e},L)
\end{equation}
where $\Scal \times \Dcal$ is the final state space, $\Scal \times \Acal$ is the intermediate state space, the set of actions $\Dcal$ is the switch values, the transition dynamics 
$P_{\Dcal}$ and $P$ at time $t$ are given by $p_{d_t}(a_t \given s_{t})$ and $ \II[\pi_t(s_t, d_{t-1})=d_t]\cdot  p(s_{t+1} \given s_{t}, a_{t})$, and the immediate costs 
$C_{\Dcal}$ and $C_{e}$ at time $t$ are given by $c_{d_t}(s_t, d_{t-1})$ and $\cost(s_t, a_t)$, respectively.

The above 2-layer MDP will allow us to estimate separately the agents'{} policies $p_{d}(\cdot \given s)$ and the transition pro\-bability $p(\cdot \given s, a)$ of the environment
using both the intermediate and final states and design an algorithm that improves the regret that problem-agnostic RL algorithms achieve in our problem.

\section{Learning to Switch in a Team of Agents}
\label{sec:single}
%
%
Since we may not know the agents'{} policies nor the transition probabilities, we need to trade off exploitation, \ie, minimizing 
the expected cost, and exploration, \ie, learning about the agents'{} policies and the transition probabilities.
To this end, we look at the pro\-blem from the perspective of episodic learning and proceed as follows.

We consider $K$ independent subsequent episodes of length $L$ and denote the aggregate length of all episodes as $T = K L$. 
Each of these episodes corresponds to a rea\-li\-za\-tion of the same finite horizon 2-layer Markov decision process, introduced in Section~\ref{sec:formulation}, 
with state spaces $\Scal \times \Acal$ and $\Scal \times \Dcal$, set of actions $\Dcal$, true agent policies $P^{*}_{\Dcal}$, true environment transition
probability $P^{*}$, and immediate costs $C_{\Dcal}$ and $C_{e}$.
However, since we do not know the true agent policies and environment transition probabilities, just before each episode $k$ starts, our goal is to find a switching 
policy $\pi^{k}$ with desirable properties in terms of total regret $R(T)$, which is given by:
%
\begin{equation}
R(T) = \sum_{k=1}^{K}  \left[ \EE_{\tau \sim \pi^{k}, P^{*}_{\Dcal}, P^{*}} \left[ c(\tau \given s_1, d_{0}) \right] - \EE_{\tau \sim \pi^{*}, P^{*}_{\Dcal}, P^{*}} \left[ c(\tau \given s_1, d_{0}) \right] \right], \label{eq:total-regret-single}
\end{equation}
where $\pi^{*}$ is the optimal switching policy under the true agent policies and environment transition probabilities. 

To achieve our goal, we apply the principle of \emph{optimism in the face of uncertainty}, \ie,
\begin{equation} \label{eq:learning-to-switch-unknown}
\pi^{k} = \argmin_{\pi} \min_{P_{\Dcal} \in \Pcal^{k}_{\Dcal}} \min_{P \in \Pcal^{k}} 
\EE_{\tau \sim \pi, P_{\Dcal}, P} \left[ c(\tau \given s_1, d_{0}) \right]
\end{equation} 
where $\Pcal^{k}_{\Dcal}$ is a $(|\Scal| \times |\Dcal| \times L)$-rectangular confidence set,  \ie, $\Pcal^{k}_{\Dcal} = \bigtimes_{s, d, t} \Pcal^{k}_{\cdot \given d, s, t}$,
and $\Pcal^{k}$ is a $(|\Scal| \times |\Acal| \times L)$-rectangular confidence set,  \ie, $\Pcal^{k} = \bigtimes_{s, a, t} \Pcal^{k}_{\cdot \given s, a, t}$. 
Here, note that the confi\-dence sets are constructed using data gathered during the first $k-1$ episodes and allows for ti\-me-varying agent policies 
$p_{d}(\cdot \given s, t)$ and transition probabilities $p(\cdot \given s, a, t)$.

However, to solve Eq.~\ref{eq:learning-to-switch-unknown}, we first need to explicitly define the confidence sets. 
To this end, we first define the empirical distributions $\hat{p}^k_{d}(\cdot \given s)$ and $\hat{p}^k(\cdot \given s, a)$ just before episode $k$ starts as:
%
\begin{align} 
\hat{p}^k_{d}(a \given s) &= \begin{cases}
\frac{N_k(s, d, a)}{N_k(s, d)} &\text{if } N_k(s,d) \neq 0\\
\frac{1}{|\Acal|} & \text{otherwise},
\end{cases} \label{eq:emp_dist_agent} \\
\hat{p}^k(s' \given s, a) &= \begin{cases}
\frac{N'_k(s, a, s')}{N'_k(s, a)} &\text{if } N'_k(s,a) \neq 0\\
\frac{1}{|\Scal|} & \text{otherwise},
\end{cases} \label{eq:emp_dist_transition} 
\end{align}
where 
\begin{align*}
N_k(s, d) &= \sum_{l=1}^{k-1} \sum_{t \in [L]} \II(s_{t} = s, d_{t} = d \textnormal{ in episode } l), \, N_k(s, d, a) = \sum_{l=1}^{k-1} \sum_{t \in [L]} \II( s_{t} = s, a_{t} = a, d_{t} = d \textnormal{ in episode } l), \\
N'_k(s, a) &= \sum_{l=1}^{k-1} \sum_{t \in [L]} \II(s_{t} = s, a_{t} = a \textnormal{ in episode } l),  \, N'_k(s, a, s') = \sum_{l=1}^{k-1} \sum_{t \in [L]} \II(s_{t} = s, a_{t} = a, s_{t+1} = s' \textnormal{ in episode } l). 
\end{align*}
%
%
Then, similarly as in~\citet{jaksch2010near}, we opt for $L^1$ confidence sets\footnote{\scriptsize This choice will result into a sequence of switching policies with 
desirable properties in terms of total regret.}, \ie,
%
\begin{align*} 
\Pcal^{k}_{\cdot \given d, s, t}(\delta) &= \left\{\, p_{d} : || p_{d}(\cdot \given s, t) - \hat{p}^{k}_{d}(\cdot \given s) ||_1 \leq \beta^{k}_{\Dcal}(s, d, \delta) \right\}, \\ 
\Pcal^{k}_{\cdot \given s, a, t}(\delta) &= \left\{\, p : || p(\cdot \given s, a, t) - \hat{p}^{k}(\cdot \given s, a) ||_1 \leq \beta^{k}(s, a, \delta) \right\}, 
\end{align*}
%
for all $d \in \Dcal$, $s \in \Scal$, $a \in \Acal$ and $t \in [L]$, where $\delta$ is a given parameter,
%
%
\begin{equation*}
\beta^{k}_{\Dcal}(s, d, \delta) = \sqrt{\frac{2 \log\left(\frac{(k-1)^{7} L^{7} |\Scal| |\Dcal| 2^{|\Acal|+1}}{\delta}\right)}{\max\{1, N_{k}(s, d)\}}} \quad \text{and} \quad
\beta^{k}(s, a, \delta) = \sqrt{\frac{2 \log\left(\frac{(k-1)^{7} L^{7} |\Scal| |\Acal| 2^{|\Scal|+1}}{\delta}\right)}{\max\{1, N_{k}(s, a)\}}}.
\end{equation*}
Next, given the switching policy $\pi$ and the transition dynamics $P_{\Dcal}$ and $P$, we define the value function as
\begin{equation}
V^{\pi} _{t| P_{\Dcal}, P}(s,d) = \EE \bigg[\sum_{\tau=t}^{L} \cost(s_\tau,a_\tau)+ c_c(d_\tau) + c_{x}(d_\tau, d_{\tau-1}) \,\big|\, s_t =s, d_{t-1}=d\bigg], \label{eq:value_function}
\end{equation}
where the expectation is taken over all the trajectories induced by the switching policy given the agents'{} policies. 
Then, for each episode $k$, we define the optimal value function $v^{k}_t(s, d)$ as
\begin{equation}
v^{k}_t(s, d) = \min_{\pi}  
  \min_{P_{\Dcal} \in \Pcal^{k}_{\Dcal}(\delta)} \min_{P \in \Pcal^{k}(\delta)} V^{\pi}_{t|P_{\Dcal}, P}(s, d).
\end{equation}
%

\begin{algorithm}[t]
\caption{UCRL2-MC}
 \label{alg:ucrl2-mc}
  \begin{algorithmic}[1]
    \STATE Cost functions   $C_{\Dcal}$ and $C_{e}$, $\delta$ 
    \STATE $\{N_k,N' _k\} \gets \textsc{InitializeCounts}{()}$
     \FOR{$k = 1, \ldots, K$}
                \STATE $\{\hat{p}^{k}_d\}, \hat{p}^{k} \gets \textsc{UpdateDistribution}(\{N_k,N' _k\})$
                \STATE $\Pcal^{k}_{\Dcal}, \Pcal^k \gets \textsc{UpdateConfidenceSets}{(\{\hat{p}^{k}_{d}\}, \hat{p}^{k}, \delta)}$              
                \STATE $\pi^k \gets \textsc{GetOptimal}{( \Pcal^{k}_{\Dcal}, \Pcal^k, C_{\Dcal}, C_e)}$,
                \STATE $(s_{1}, d_{0}) \gets \textsc{InitializeConditions}()$
                \FOR {$t=1, \ldots, L$}
                        \STATE $d_{t} \gets \pi^k_t(s_{t}, d_{t-1})$
                        \STATE $a_{t} \sim p_{d_t}(\cdot | s_{t})$ 
                        \STATE $s_{t+1} \sim P(\cdot | s_t, a_t)$
                        \STATE $\Ncal \gets \textsc{UpdateCounts}((s_{t}, d_{t}, a_{t}, s_{t+1}), \{N_k,N' _k\})$
                \ENDFOR
        \ENDFOR
        \STATE \mbox{\bf Return} $\pi^{K}$
  \end{algorithmic}
\end{algorithm}
Then, we are ready to use the following key theorem, which gives a solution to Eq.~\ref{eq:learning-to-switch-unknown} (proven in Appendix~\ref{app:proofs}):
\begin{theorem} \label{thm:opt-unknown}
For any episode $k$, the optimal value function $v^{k}_t(s, d)$ satisfies the following recursive equation:
\begin{equation}
v^{k}_t(s, d) =  \min_{d_t \in \Dcal} \Big[ c_{d_t}(s, d) 
+ \min_{p_{d_t}
\in \Pcal^{k}_{\cdot \given d_t,s,t}} \, \sum_{a \in \Acal}  p_{d_t}(a \given s, t) \\ 
\times \Big(c_e(s, a) + \min_{p
\in \Pcal^{k}_{\cdot \given s,a,t}} \EE_{s' \sim p(\cdot \given s, a, t)} [v^k _{t+1}(s', d_t)] \Big) \Big], \label{eq:bellman-final}
\end{equation}
with $v^{k}_{L+1}(s, d) = 0$ for all $s \in \Scal$ and $d \in \Dcal$. Moreover, if $d^* _t$ is the solution to the minimization problem of the RHS of the 
above recursive equation, then $\pi^k_t(s, d) = d^*_t$.
%
\end{theorem}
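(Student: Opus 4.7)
The plan is to prove Theorem \ref{thm:opt-unknown} by backward induction on $t$, starting from the base case $v^{k}_{L+1}(s,d)=0$ which holds by definition. The core of the argument is showing that the combined minimization in the definition of $v^{k}_{t}$ decouples timewise, thanks to (i) the finite-horizon Markov structure of the 2-layer MDP and (ii) the rectangularity of $\Pcal^{k}_{\Dcal}$ and $\Pcal^{k}$.

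For the inductive step, I would first unroll one step of Equation \ref{eq:value_function} to write
\begin{align*}
V^{\pi}_{t|P_{\Dcal},P}(s,d) = c_{\pi_t(s,d)}(s,d) + \sum_{a\in\Acal} p_{\pi_t(s,d)}(a\given s,t)\Big[c_e(s,a) + \sum_{s'\in\Scal} p(s'\given s,a,t)\, V^{\pi}_{t+1|P_{\Dcal},P}(s',\pi_t(s,d))\Big].
\end{align*}
Observe that the outer policy choice at time $t$ enters only through the scalar $d_t := \pi_t(s,d)\in\Dcal$, while the policy components $\pi_{t+1},\ldots,\pi_L$ enter only through $V^{\pi}_{t+1|P_{\Dcal},P}$. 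Likewise, by the definitions of $\Pcal^{k}_{\Dcal}$ and $\Pcal^{k}$ as Cartesian products over $(s,d,t)$ and $(s,a,t)$ respectively, the slices $p_{d_t}(\cdot\given s,t)$ and $p(\cdot\given s,a,t)$ that appear at time $t$ can be chosen independently of all other slices in $\Pcal^{k}_{\Dcal}$ and $\Pcal^{k}$, which appear only inside $V^{\pi}_{t+1|P_{\Dcal},P}$.

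Using these two observations, I would interchange the minimizations: the joint $\min$ over $\pi$, $P_{\Dcal}\in\Pcal^{k}_{\Dcal}$, $P\in\Pcal^{k}$ splits into an outer $\min_{d_t\in\Dcal}$, followed by local $\min$s over the slices $p_{d_t}(\cdot\given s,t)\in\Pcal^{k}_{\cdot\given d_t,s,t}$ and $p(\cdot\given s,a,t)\in\Pcal^{k}_{\cdot\given s,a,t}$, followed by a joint $\min$ over $(\pi_{t+1},\ldots,\pi_L)$ and the remaining slices, which by the induction hypothesis evaluates to $v^{k}_{t+1}(s',d_t)$. Pushing the local $\min$ over $p(\cdot\given s,a,t)$ inside the sum over $s'$ is justified because this slice is not constrained to couple across different $(s,a)$ pairs, and nonnegativity of $p_{d_t}(a\given s,t)$ lets the outer $\min$ over the slice $p_{d_t}(\cdot\given s,t)$ pass through the $\sum_a$ as in Equation \ref{eq:bellman-final}. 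The pointwise-in-state character of the optimal $d_t$ then yields the greedy policy $\pi^k_t(s,d)=d^*_t$.

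The main obstacle I anticipate is giving a clean, rigorous justification of the interchange of $\min$ and $\sum$ together with the decoupling of the rectangular confidence sets across time; this is where sloppy bookkeeping would allow an adversarial choice of a later slice $p(\cdot\given s',a',t')$ to depend on the realization of $a$ or $s'$ at time $t$, which would invalidate the factorization. To handle this, I would write $P_{\Dcal}$ and $P$ explicitly as tuples indexed by $(s,d,t)$ and $(s,a,t)$, separate out the time-$t$ coordinates from the rest, and apply Fubini-style reasoning on the finite sums together with the fact that each inner $\min$ is attained (the confidence sets are compact and the objectives are linear in the slice). The remainder of the proof, namely verifying the terminal condition and reading off the optimal switching decision $d^{*}_{t}$, is then immediate.
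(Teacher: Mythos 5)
Your plan is correct and matches the paper's argument in all essentials: unroll one step of the value function (the paper's Lemma~\ref{lem:bellman}), use the rectangularity of $\Pcal^k_{\Dcal}$ and $\Pcal^k$ to isolate the time-$t$ slices from the tail, and collapse the remaining joint minimization to $v^k_{t+1}$. The only presentational difference is that the paper formalizes the min/expectation interchange you flag as the main obstacle by proving matching upper and lower bounds on $v^k_t$ --- the lower bound from $\min_a \EE[X(a)] \ge \EE[\min_a X(a)]$ and the upper bound by substituting the optimal continuation $(\pi^*_{(t+1)^+}, P^*_{\Dcal}, P^*)$ --- which is exactly the rigorous form of the decoupling you describe.
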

The above result readily implies that, just before each episode $k$ starts, we can find the optimal swi\-tching po\-li\-cy $\pi^k = (\pi^{k}_1, \ldots, \pi^{k}_{L})$
using dynamic programming, starting with $v_{L + 1}(s, d) = 0$ for all $s \in \Scal$ and $d \in \Dcal$.
Moreover, similarly as in~\citet{strehl2008analysis}, we can solve the inner minimization problems in Eq.~\ref{eq:bellman-final} analytically using 
Lemma~\ref{lem:opt-problem} in Appendix~\ref{app:useful-lemmas}. To this end, we first find the optimal 
$p(\cdot \given s, a, t)$ for all and $a \in \Acal$ and then we find the optimal $p_{d_t}(\cdot\given s,t)$ for all $d_t \in \Dcal$.
%
%
Algorithm~\ref{alg:ucrl2-mc} summarizes the whole procedure, which we refer to as UCRL2-MC. 
%
%


Within the algorithm, the function $\textsc{GetOptimal}(\cdot)$ finds the optimal policy $\pi^{k}$ using dynamic programming, as described above,
and $\textsc{UpdateDistribution}(\cdot)$ computes Eqs.~\ref{eq:emp_dist_agent} and~\ref{eq:emp_dist_transition}.
Moreover, it is important to notice that, in lines $8$--$10$, the switching policy $\pi^{k}$ is actually deployed, the true agents take actions on the true environment
and, as a result, action and state transition data from the true agents and the true environment is gathered. 
%
%

Next, 
%
%
%
the following theorem shows that the sequence of policies $\{ \pi^k \}_{k=1}^{K}$ found by Algorithm~\ref{alg:ucrl2-mc} achieve a total regret that
is sublinear with respect to the number of steps, as defined in Eq.~\ref{eq:total-regret-single} (proven in Appendix~\ref{app:proofs}):
\begin{theorem} \label{thm:total-regret}
Assume we use Algorithm~\ref{alg:ucrl2-mc} to find the switching policies $\pi^{k}$. Then, with probability at least $1-\delta$, it holds that
\begin{equation}
R(T) \leq \rho_1 L \sqrt{ |\Acal| |\Scal| |\Dcal| T \log \left(\frac{|\Scal| |\Dcal| T}{\delta}\right) }  + \rho_2 L |\Scal| \sqrt{ |\Acal| T \log \left(\frac{|\Scal| |\Acal| T}{\delta}\right) } \label{eq:regret-single}
\end{equation}
where $\rho_1, \rho_2 > 0$ are constants.
\end{theorem}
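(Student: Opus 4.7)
The plan is to follow the UCRL2-style analysis of \citet{jaksch2010near}, suitably adapted to the 2-layer structure that separates agent-policy estimation from environment-transition estimation. I break the argument into four steps.

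\textbf{Step 1: A high-probability ``good event''.} First I show that, with probability at least $1-\delta$, for every episode $k$ and every $(s,a,d,t)$ the true agent policy $p^{*}_{d}(\cdot\given s)$ lies in $\Pcal^{k}_{\cdot\given d,s,t}(\delta)$ and the true transition $p^{*}(\cdot\given s,a)$ lies in $\Pcal^{k}_{\cdot\given s,a,t}(\delta)$. This follows from the Weissman $L^{1}$ concentration inequality applied to the empirical distributions in Eqs.~\ref{eq:emp_dist_agent}--\ref{eq:emp_dist_transition}, where the $2^{|\Acal|+1}$ and $2^{|\Scal|+1}$ factors inside the logarithms of $\beta^{k}_{\Dcal}$ and $\beta^{k}$ respectively absorb the $|\Acal|$- and $|\Scal|$-dimensional union bound, and the $(k-1)^{7}L^{7}|\Scal||\Dcal|$ and $(k-1)^{7}L^{7}|\Scal||\Acal|$ factors absorb the union bound over episodes, time steps and state/action pairs.

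\textbf{Step 2: Optimism.} Conditioned on the good event, $(P^{*}_{\Dcal},P^{*})$ is feasible in Eq.~\ref{eq:learning-to-switch-unknown}, so $v^{k}_{1}(s_{1},d_{0})\leq \EE_{\tau\sim\pi^{*},P^{*}_{\Dcal},P^{*}}[c(\tau\given s_{1},d_{0})]$. Hence the per-episode regret is bounded above by the simulation-lemma gap $V^{\pi^{k}}_{1\given P^{*}_{\Dcal},P^{*}}(s_{1},d_{0}) - V^{\pi^{k}}_{1\given \tilde{P}^{k}_{\Dcal},\tilde{P}^{k}}(s_{1},d_{0})$, where $(\tilde{P}^{k}_{\Dcal},\tilde{P}^{k})$ are the optimistic models attaining the minimum in Theorem~\ref{thm:opt-unknown}.

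\textbf{Step 3: Value-difference decomposition for the 2-layer MDP.} Using the recursion in Eq.~\ref{eq:bellman-final} and its analogue for the true models, I add and subtract $\tilde{p}^{k}_{d_{t}}(a\given s,t)$ and $\tilde{p}^{k}(s'\given s,a,t)$ to write
\begin{align*}
V^{\pi^{k}}_{t\given P^{*}_{\Dcal},P^{*}}(s,d) - v^{k}_{t}(s,d)
&= \sum_{a}\bigl[p^{*}_{d_{t}}(a\given s) - \tilde{p}^{k}_{d_{t}}(a\given s,t)\bigr]\!\left(c_{e}(s,a) + \sum_{s'}p^{*}(s'\given s,a)V^{\pi^{k}}_{t+1}(s',d_{t})\right) \\
&\quad + \sum_{a}\tilde{p}^{k}_{d_{t}}(a\given s,t)\sum_{s'}\bigl[p^{*}(s'\given s,a)-\tilde{p}^{k}(s'\given s,a,t)\bigr]V^{\pi^{k}}_{t+1}(s',d_{t}) \\
&\quad + \sum_{a}\tilde{p}^{k}_{d_{t}}(a\given s,t)\sum_{s'}\tilde{p}^{k}(s'\given s,a,t)\bigl[V^{\pi^{k}}_{t+1}(s',d_{t}) - v^{k}_{t+1}(s',d_{t})\bigr].
\end{align*}
Bounding value functions crudely by $L\cdot c_{\max}$, applying H\"older's inequality together with the confidence-set radii $\beta^{k}_{\Dcal}$ and $\beta^{k}$, and unrolling the recursion gives a per-episode bound of the form $2L \,\EE_{\pi^{k},\tilde{P}^{k}_{\Dcal},\tilde{P}^{k}}\!\bigl[\sum_{t=1}^{L}\beta^{k}_{\Dcal}(s_{t},d_{t},\delta)+\beta^{k}(s_{t},a_{t},\delta)\bigr]$. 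Converting the expectation under the optimistic model to the actual trajectory counts introduces an $O(L\sqrt{T\log(1/\delta)})$ Azuma-Hoeffding lower-order term.

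\textbf{Step 4: Summation via pigeonhole.} Using the standard identity $\sum_{k=1}^{K}\sum_{t=1}^{L}\mathbf{1}/\sqrt{\max\{1,N_{k}(s_{t},d_{t})\}}\leq O(\sqrt{|\Scal||\Dcal|T})$ (and its analogue with $N'_{k}(s_{t},a_{t})$), followed by Cauchy-Schwarz, I obtain
\[
\sum_{k}\sum_{t}\beta^{k}_{\Dcal}(s_{t},d_{t},\delta) \lesssim \sqrt{|\Acal||\Scal||\Dcal|T\log(|\Scal||\Dcal|T/\delta)},\qquad
\sum_{k}\sum_{t}\beta^{k}(s_{t},a_{t},\delta) \lesssim |\Scal|\sqrt{|\Acal|T\log(|\Scal||\Acal|T/\delta)},
\]
where the extra $\sqrt{|\Acal|}$ and $\sqrt{|\Scal|}$ factors come from the $2^{|\Acal|}$ and $2^{|\Scal|}$ inside the logs. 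Multiplying by the $L$ from Step~3 and collecting constants yields exactly the two terms in Eq.~\ref{eq:regret-single}.

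\textbf{Main obstacle.} The one genuinely new piece of bookkeeping compared with UCRL2 is Step~3: in the 2-layer MDP the optimistic agent policy $\tilde{p}^{k}_{d_{t}}$ and the optimistic transition $\tilde{p}^{k}$ are chosen \emph{jointly} (the inner minimization in Eq.~\ref{eq:bellman-final} picks $p$ first and then $p_{d_{t}}$), so the value-difference expansion must be arranged so that the transition-deviation term is weighted by $\tilde{p}^{k}_{d_{t}}$ (giving counts of the form $N'_{k}(s,a)$ on-policy) while the policy-deviation term is weighted by $p^{*}_{d_{t}}$ (giving counts of the form $N_{k}(s,d)$). Getting the ``add-and-subtract'' order right—so that each deviation multiplies a quantity bounded by $L$ and drops into a pigeonhole sum with the matching count—is the delicate part; once that decomposition is in place the remaining estimates are standard.
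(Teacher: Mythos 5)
Your overall strategy is the same as the paper's: optimism on a high-probability good event, a simulation-lemma decomposition that separates the agent-policy deviation (controlled by $\beta^{k}_{\Dcal}$ and the counts $N_k(s,d)$) from the transition deviation (controlled by $\beta^{k}$ and $N'_k(s,a)$), and a pigeonhole summation giving the two terms of Eq.~\ref{eq:regret-single}. Steps 1, 2 and 4 match the paper's argument (the paper handles the bad event slightly differently, via the episode-splitting trick of Jaksch et al.\ with the $(k-1)^7$ factor in $\beta$, paying an extra $\sqrt{T}$ that is absorbed into the constants, but your all-episodes union bound is an equally valid route).

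The one step that needs repair is Step 3. You have correctly identified it as the delicate point, but your add-and-subtract goes in the wrong direction: in your decomposition the recursive term is weighted by $\tilde{p}^{k}_{d_t}\tilde{p}^{k}$ and the transition-deviation term by $\tilde{p}^{k}_{d_t}$, so when you unroll the recursion the bonuses $\beta^{k}_{\Dcal}(s_t,d_t,\delta)+\beta^{k}(s_t,a_t,\delta)$ are integrated against the occupancy measure of the \emph{optimistic} model, while the counts $N_k,N'_k$ in those bonuses are accumulated under the \emph{true} model. Azuma--Hoeffding does not bridge this: it controls realization versus expectation under a single measure, not a change of measure between two models (and in fact no realization-versus-expectation step is needed here at all, because $R(T)$ in Eq.~\ref{eq:total-regret-single} is already a difference of expectations). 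The paper's Lemma~\ref{lemma:upper-exp} avoids the issue by arranging the telescoping so that both the recursive term and the transition-deviation term are weighted by the true agent policy $p^{*}_{\pi^k_t(s,d)}$ and the recursion propagates under the true $p^{*}(\cdot\given s,a)$; the resulting expectation is then exactly over the data-generating trajectory distribution and feeds directly into the counting lemma. Your version can be salvaged, either by redoing the add-and-subtract in the other order or by bounding the measure change with the same confidence radii (a second-order correction), but as written the conversion step is not justified.
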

The above regret bound suggests that our algorithm may achieve higher regret than standard UCRL2~\citep{jaksch2010near}, one of the most popular problem-agnostic RL 
algorithms.  
More specifically, one can readily show that, if we use UCRL2 to find the switching policies $\pi^{k}$ (refer to Appendix~\ref{app:ucrl2}), then, with probability at least $1-\delta$, 
it holds that
\begin{equation}
R(T) \leq \rho L |\Scal| \sqrt{ |\Dcal| T \log \left(\frac{|\Scal| |\Dcal| T}{\delta}\right) }
\end{equation}
where $\rho$ is a constant. Then, if we omit constant and logarithmic factors and assume the size of the team of agents is smaller than the size of state space,
\ie, $|\Dcal| < |\Scal|$, we have that, for UCRL2, the regret bound is $\Tilde{\mathcal{O}}(L|\Scal|\sqrt{|\Dcal| T})$ while, for UCRL2-MC, it is 
$\Tilde{\mathcal{O}}(L|\Scal|\sqrt{|\Acal| T})$.

That being said, in practice, we have found that our algorithm achieves comparable regret with respect to UCRL2, as shown in Figure~\ref{fig:single-team-regret}.
In addition, after applying our algorithm on a specific team of agents and environment, we can reuse the confidence intervals over the transition probability $p(\cdot \given s, a)$ 
we have learned to find the optimal switching policy for a different team of agents operating in a similar environment. In contrast, after applying UCRL2, we would only have a 
confidence interval over the conditional probability defined by Eq.~\ref{eq:dynamics-standard-mdp}, which would be of little use to find the optimal switching policy for a 
different team of agents.

In the following section, we will build up on this insight by considering several independent teams of agents operating in similar environments. We will demonstrate that, whenever we aim to find multiple sequences of switching policies for these independent teams, a straightforward variation of UCRL2-MC greatly benefits from maintaining shared confidence bounds for the 
transition probabilities of the environments and enjoys a better regret bound than UCRL2.

\xhdr{Remarks} For ease of exposition, we have assumed that both the machine and human agents follow arbitrary Markov policies that do not change due 
to switching. However, our theoretical results still hold if we lift this assumption---we just need to define the agents' policies as $p_d(a_t|s_t, d_t, d_{t-1})$ 
and construct separate confidence sets based on the switch values.

\section{Learning to Switch Across Multiple Teams of Agents}
\label{sec:parallel}
In this section, rather than finding a sequence of switching policies for a single team of agents, we aim to find multiple sequences of switching policies across several 
independent teams operating in similar environments. 
We will analyze our algorithm in scenarios where it can maintain shared confidence bounds for the transition probabilities of the environments across these independent teams.
For instance, when the learning algorithm is deployed in centralized settings, it is possible to collect data across independent teams to maintain shared confidence intervals on 
the common parameters (i.e., the environment'{}s transition probabilities in our problem setting).
This setting fits a variety of real applications, more prominently, think of a car manufacturer continuously collecting driving data from million of human drivers wishing to learn different 
switching policies for each driver to implement a personalized semi-autonomous driving system. 
Similarly as in the previous section, we look at the problem from the perspective of episodic learning and proceed as follows. 
%
%

Given $N$ independent teams of agents $\{\Dcal_i\}_{i=1}^{N}$, we consider $K$ independent subsequent episodes of length $L$ 
per team and denote the aggregate length of all of these episodes as $T = KL$.
For each team of agents $\Dcal_i$, every episode corresponds to a rea\-li\-za\-tion of a finite horizon 2-layer Markov decision process 
with state spaces $\Scal \times \Acal$ and $\Scal \times \Dcal_i$, set of actions $\Dcal_i$, true agent policies $P^{*}_{\Dcal_i}$, true environment transition
probability $P^{*}$, and immediate costs $C_{\Dcal_i}$ and $C_e$. 
Here, note that all the teams operate in a similar environment, \ie, $P^{*}$ is shared across teams, and, without loss of generality, they share the same costs.
Then, our goal is to find the switching policies $\pi_i^{k}$ with desirable properties in terms of total regret $R(T, N)$, which is given by:
\begin{equation}
R(T, N)  = \sum_{i=1}^{N} \sum_{k=1}^{K} \left[ \EE_{\tau \sim \pi_i^{k}, P^{*}_{\Dcal_i}, P^{*}} \left[ c(\tau \given s_1, d_{0}) \right] - \EE_{\tau \sim \pi_i^{*}, P^{*}_{\Dcal_i}, P^{*}} \left[ c(\tau \given s_1, d_{0}) \right] \right] \label{eq:total-regret-multiple}
\end{equation}
where $\pi_i^{*}$ is the optimal switching policy for team $i$, under the true agent policies and environment transition probability.
\begin{figure}[t]
\centering
     \subfloat[$\gamma_0 = \texttt{no-car}$]{\includegraphics[scale=0.5]{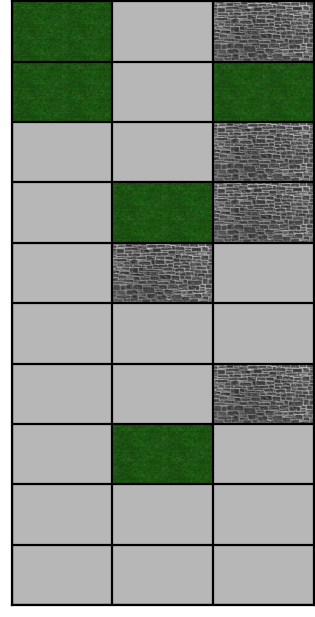}\label{subfig:Env1}} \hspace{5mm}
     \subfloat[$\gamma_0 = \texttt{light}$]{\includegraphics[scale=0.5]{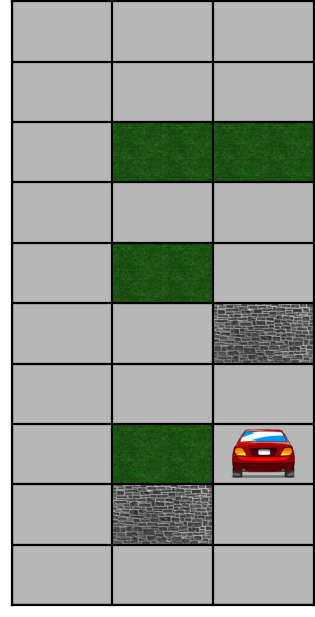}\label{subfig:Env2}} \hspace{5mm}
     \subfloat[$\gamma_0 = \texttt{heavy}$]{\includegraphics[scale=0.5]{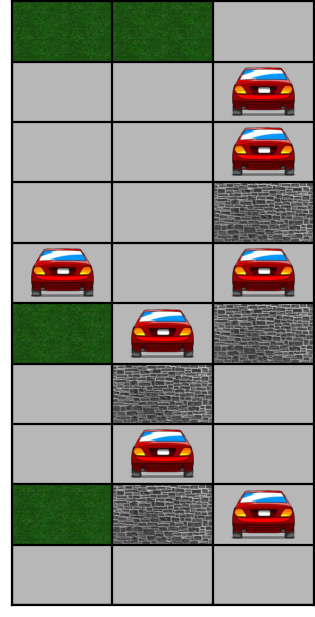}\label{subfig:Env3}}
\caption{Three examples of environment realizations with different initial traffic level $\gamma_0$.}
\label{fig:environment}
\end{figure}

To achieve our goal, we just run $N$ instances of UCRL2-MC (Algorithm~\ref{alg:ucrl2-mc}), each with a different confidence set $\Pcal^{k}_{\Dcal_i}(\delta)$
for the agents'{} policies, similarly as in the case of a single team of agents, but with a shared confidence set $\Pcal^{k}_{}(\delta)$ for the environment 
transition probability.
Then, we have the following key corollary, which readily follows from Theorem~\ref{thm:total-regret}:
\begin{corollary} \label{cor:regret}
Assume we use $N$ instances of Algorithm~\ref{alg:ucrl2-mc} to find the switching policies $\pi_i^{k}$ using a shared
confidence set for the environment transition probability. Then, with probability at least $1-\delta$, it holds that
\begin{align}
R(T, N) \leq \rho_1 N L \sqrt{ |\Acal| |\Scal| |\Dcal| T \log \left(\frac{|\Scal| |\Dcal| T}{\delta}\right) } + \rho_2 L |\Scal| \sqrt{ |\Acal| N T \log \left(\frac{|\Scal| |\Acal| T}{\delta}\right) } \label{eq:regret-single}
\end{align}
where $\rho_1, \rho_2 > 0$ are constants.
\end{corollary}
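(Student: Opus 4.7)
The plan is to derive Corollary~\ref{cor:regret} by lifting the single-team analysis of Theorem~\ref{thm:total-regret} to the $N$-team setting and exploiting the fact that the environment-transition confidence set is shared. I would begin from the additive decomposition $R(T, N) = \sum_{i=1}^{N} R_i(T)$, where $R_i(T)$ collects the per-episode regret contributions from team $i$. On the high-probability event that, for every episode $k$ and team $i$, the true agent policies lie in $\Pcal^{k}_{\Dcal_i}(\delta)$ and the (common) true environment transition probability lies in the shared set $\Pcal^{k}(\delta)$---an event whose complement has mass at most $\delta$ by a union bound over $N$ per-team agent events and one shared environment event, absorbed into the logarithmic factor---the optimism argument behind Eq.~\ref{eq:learning-to-switch-unknown} ensures that the per-team value of the policy $\pi_i^k$ selected by the algorithm is upper-bounded by that of $\pi_i^{*}$ under the true parameters.

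Next, I would reuse the regret decomposition from the proof of Theorem~\ref{thm:total-regret} within each team. For a fixed $i$, the one-step difference between the optimistic value function and its realization splits into two non-negative pieces: one proportional to the agent-policy confidence width $\beta^{k}_{\Dcal_i}(s, d, \delta)$, which depends only on the \emph{per-team} counts $N_{k,i}(s, d)$, and one proportional to the environment-transition width $\beta^{k}(s, a, \delta)$, which depends on the \emph{shared} counts $N'_{k}(s, a) = \sum_{j=1}^{N} N'_{k,j}(s, a)$. The agent-policy piece can be controlled team-by-team by directly invoking the bound established in Theorem~\ref{thm:total-regret}: summing the single-team estimate $\rho_1 L \sqrt{|\Acal||\Scal||\Dcal| T \log((|\Scal||\Dcal|T)/\delta)}$ over $i = 1, \ldots, N$ yields the first term of the corollary.

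The heart of the argument is the joint control of the environment-related regret across all teams. After propagating the one-step widths through the value-function difference via the same telescoping step as in Theorem~\ref{thm:total-regret}, this contribution is governed by
\begin{equation*}
\sum_{i=1}^{N} \sum_{k=1}^{K} \sum_{t=1}^{L} \beta^{k}\!\left(s^{(i)}_{k,t},\, a^{(i)}_{k,t},\, \delta\right),
\end{equation*}
where the widths now use the shared counts $N'_{k}(s,a)$. Since these counts pool visits from all $N$ teams and the total number of state--action visits over the entire schedule equals $NT$, the classical pigeonhole identity $\sum 1/\sqrt{\max(1, N'_{k}(s,a))} = O(\sqrt{|\Scal||\Acal| NT})$ from Jaksch et al.\ applies verbatim with $NT$ replacing $T$. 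Combining this with the $L|\Scal|$ factor coming from the value-difference telescoping reproduces the second term $\rho_2 L |\Scal| \sqrt{|\Acal| N T \log((|\Scal||\Acal|T)/\delta)}$, completing the bound.

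The main obstacle will be verifying that the shared-count pigeonhole bound is legitimate despite the fact that $N'_{k}(s,a)$ is driven by a stochastic process mixing trajectories from $N$ teams executing different switching policies. I would address this by observing that the pigeonhole inequality only relies on the deterministic arithmetic of how often a count can double along any sequence of visits, regardless of which team produced each visit; thus the bound extends to arbitrary interleavings of the $N$ teams, provided one interprets the global time axis as the concatenation of all $NT$ interactions. With this step justified, the remainder of the proof is a mechanical adaptation of the single-team argument, and the constants $\rho_1, \rho_2$ inherit the same values as in Theorem~\ref{thm:total-regret}.
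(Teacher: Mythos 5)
Your proposal is correct and matches the argument the paper intends: the paper gives no explicit proof of Corollary~\ref{cor:regret} (it states only that it ``readily follows'' from Theorem~\ref{thm:total-regret}), and your decomposition---summing the per-team agent-policy confidence terms to get the $N\sqrt{T}$ first term, while pooling the shared environment counts over all $NT$ interactions so the pigeonhole bound yields the $\sqrt{NT}$ second term---is exactly the intended lifting of the single-team analysis. Your remark that the doubling/pigeonhole argument is insensitive to how the $N$ teams' visits are interleaved correctly addresses the only subtlety the paper leaves implicit.
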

The above results suggests that our algorithm may achieve lower regret than UCRL2 in a scenario with multiple teams of agents ope\-ra\-ting in 
similar environments.
This is because, under UCRL2, the confidence sets for the conditional probability defined by Eq.~\ref{eq:dynamics-standard-mdp} cannot 
be shared across teams.
More specifically, 
if we use $N$ instances of UCLR2 to find the switching policies $\pi^{k}_i$, then, with probability at least $1-\delta$,
it holds that
\begin{equation*}
R(T,N) \leq \rho N L |\Scal| \sqrt{ |\Dcal| T \log \left(\frac{|\Scal| |\Dcal| T}{\delta}\right) }
\end{equation*}
where $\rho$ is a constant. 
Then, if we omit constant and logarithmic factors and assume $|\Dcal_i| < |\Scal|$ for all $i \in [N]$, we have that, for UCRL2, the regret bound is
$\Tilde{\mathcal{O}}(N L |\Scal|\sqrt{|\Dcal| T})$ while, for UCRL2-MC, it is $\Tilde{\mathcal{O}}(L|\Scal|\sqrt{|\Acal| T N} + NL\sqrt{|\Acal||\Scal||\Dcal|T})$.
Importantly, in practice, we have found that UCRL2-MC does achieve a significant lower regret than UCRL2, as shown in the Figure~\ref{fig:multiple-team-regret}.

\vspace{-2mm}
\section{Experiments}
\label{sec:experiments}

\subsection{Obstacle avoidance}
We perform a variety of simulations in obstacle avoidance, where teams of agents (drivers) consist of one 
human agent ($\hum$) and one machine agent ($\mac$), i.e., $\mathcal{D} = \{\hum, \mac\}$.
We consider a lane driving environment 
with three lanes and infinite rows, where the type of each individual cell (\ie, \texttt{road}, \texttt{car}, 
\texttt{stone} or \texttt{grass}) in row $r$ is sampled independently at random with a probability that depends on the traffic level $\gamma_r$, which
can take three discrete values, $\gamma_r \in \{\texttt{no-car}, \texttt{light}, \texttt{heavy}\}$. 
%
%
The traffic level of each row $\gamma_{r+1}$ is sampled at random with a probability that depends on the traffic level of the previous row 
$\gamma_{r}$. The probability of each cell type based on traffic level, as well as the conditional distribution of traffic levels can be found in Appendix~\ref{app:env-dist}.
%
%
\begin{figure*}[t]
     \centering
     \subfloat[$c_{x} = 0.1,\; c_c(\hum) = 0.2$]{
	\scriptsize
	\stackunder[5pt]{\includegraphics[scale=0.5]{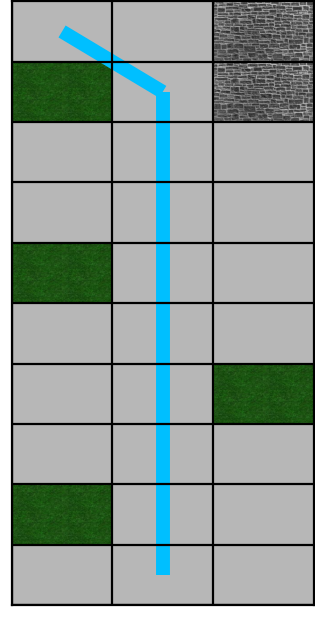}}{$k \approx 5$} 
	\stackunder[5pt]{\includegraphics[scale=0.5]{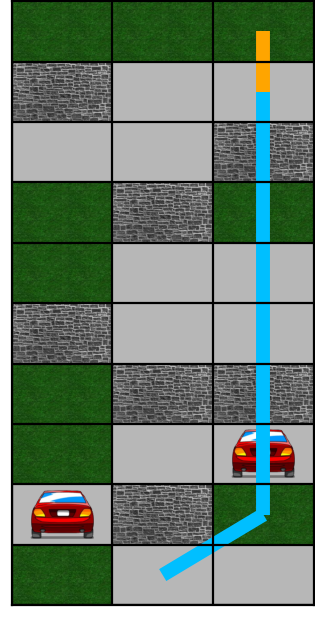}}{$k \approx 500$}  
	\stackunder[5pt]{\includegraphics[scale=0.5]{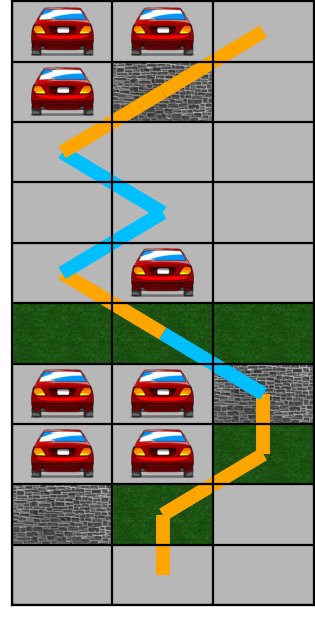}}{$k \approx 3000$} 
     }  \hspace{4mm}
     \subfloat[$c_{x} = 0,\; c_c(\hum) = 0$]{
        \scriptsize
	\stackunder[5pt]{\includegraphics[scale=0.5]{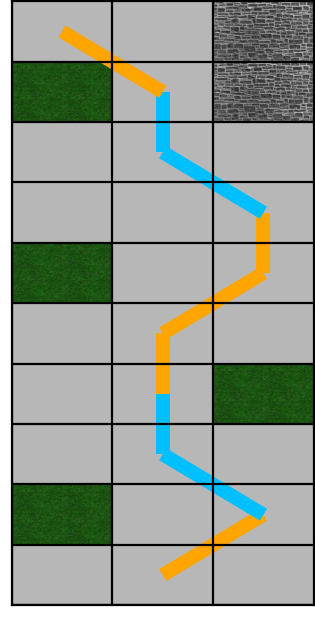}}{$k \approx 5$} 
	\stackunder[5pt]{\includegraphics[scale=0.5]{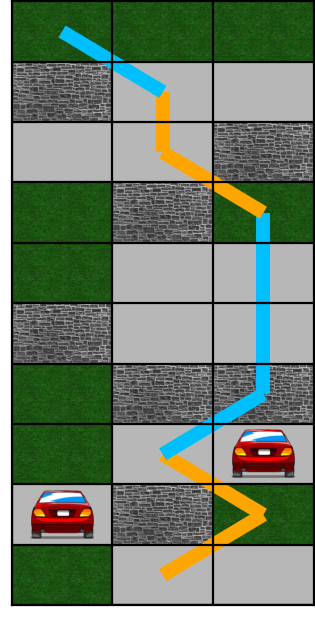}}{$k \approx 500$} 
	\stackunder[5pt]{\includegraphics[scale=0.5]{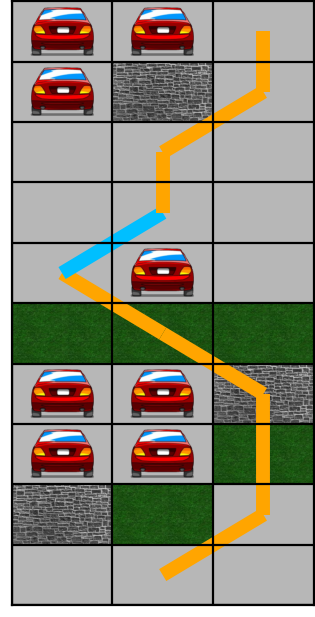}}{$k \approx 3000$}
     }
%
\caption{Trajectories induced by the switching policies found by Algorithm~\ref{alg:ucrl2-mc}.
The blue and orange segments indicate machine and human control, respectively.
%
%
In both panels, we train Algorithm~\ref{alg:ucrl2-mc} within the same sequence of episodes, where the initial traffic level of each episode is sampled uniformly from $\{\texttt{no-car}, \texttt{light}, \texttt{heavy}\}$, and show three episodes with different initial traffic levels.
%
The results indicate that, in the latter episodes, the algorithm has learned to switch to the human driver in heavier traffic levels.
} 
\label{fig:trajectories-single-team}
\end{figure*}

At any given time $t$, we assume that whoever is in control---be it the machine or the human---can take three diffe\-rent actions 
$\Acal = \{ \texttt{left, straight, right} \}$. 
Action \texttt{left} steers the car to the left of the current lane, action \texttt{right} steers it to the right and action \texttt{straight}
leaves the car in the current lane.
If the car is already on the leftmost (rightmost) lane when taking action left (right), then the lane remains unchanged.
Irres\-pec\-tive of the action taken, the car always moves forward. The goal of the cyberphysical system is to drive the car from an initial state in time $t=1$ until the end of the episode $t = L $ with the minimum total amount of cost. In our experiments, we set $L = 10$.
Figure~\ref{fig:environment} shows three examples of environment realizations. 

\xhdr{State space}
To evaluate the switching policies found by Algorithm~\ref{alg:ucrl2-mc}, we experiment with a \emph{sensor-based} state space, where
%
the state values are the type of the cu\-rrent cell and the three cells the car can move into in the next time step, as well as the current traffic level---we assume 
the agents (be it a human or a machine) can measure the traffic level.
For example, assume at time $t$ the traffic is light, the car is on a road cell and, if it moves forward left, it hits a stone, if it moves forward straight, it hits a car,
and, if it moves forward right, it drives over grass, then its state value is $s_t = (\texttt{light}, \texttt{road}, \texttt{stone}, \texttt{car}, \texttt{grass})$.
Moreover, if the car is on the leftmost (rightmost) lane, then we set the value of the third (fifth) dimension in $s_t$ to $\emptyset$.
Therefore, under this state representation, the resulting MDP has $\sim$$3\times 5^4$ states.
%
%
%
\begin{figure}[t]
	 \centering
     \subfloat[$c_c(\hum) = 0.2, c_{x} = 0.1$]{
	\includegraphics[scale=0.9]{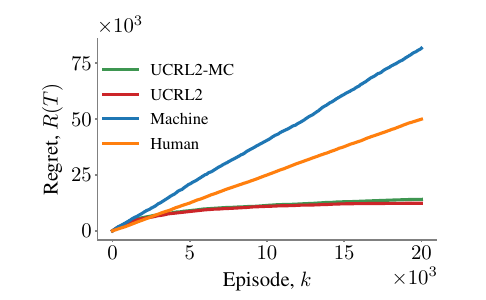}
	 \label{fig:single-team-regret.a}
     } 
     \subfloat[$c_c(\hum) = 0, c_{x} = 0$]{
	\includegraphics[scale=0.9]{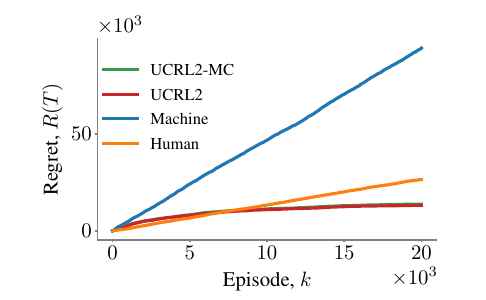}
	\label{fig:single-team-regret.b}
      }
\caption{Total regret of the trajectories induced by the switching policies found by Algorithm~\ref{alg:ucrl2-mc} and those induced by 
a variant of UCRL2 in comparison with the trajectories induced by a machine driver and a human driver in a setting with a single team 
of agents. In all panels, we run $K = 20{,}000$. 
%
%
%
%
For Algorithm~\ref{alg:ucrl2-mc} and the variant of UCRL2, the regret is sublinear with respect to the number of time steps whereas, for the 
machine and the human drivers, the regret is linear.
} 
\label{fig:single-team-regret}
\vspace{-1em}
\end{figure}

\xhdr{Cost and human/machine policies}
%
We consider a state-dependent environment cost $\cost(s_t,a_t) = \cost(s_t)$ that depends on the type 
of the cell the car is on at state $s_t$, \ie, $\cost(s_t) = 0$ if the type of the current cell is road, $\cost(s_t) = 2$ if it is grass, $\cost(s_t) = 4$ if it is 
stone and $\cost(s_t) = 10$ if it is car.
Moreover, in all simulations, we use a machine policy that has been trained using a standard RL algorithm on environment realizations 
with $\gamma_0 = \texttt{no-car}$. In other words, the machine policy is trained to perform well under a low traffic level.
Moreover, we consider all the humans pick which action to take (\texttt{left}, \texttt{straight} or \texttt{right}) 
according to a noisy estimate of the environment cost of the three cells that the car can move into in the next time step. 
%
More specifically, each human model $\hum$ computes a noisy estimate of the cost $\hat{c}_e(s) = \cost(s) + \epsilon_s$ of each of the three cells the car can move into, 
where $\epsilon_s \sim N(0, \sigma_{\hum})$, and picks the action that moves the car to the cell with the lowest noisy estimate\footnote{\scriptsize Note that, in our theoretical 
results, we have no assumption other than the Markov property regarding the human policy.}.   
As a result, human drivers are generally more reliable than the machine under high traffic levels, however, the machine is more reliable than humans under low traffic 
level, where its policy is near-optimal (See Appendix~\ref{app:performance} for a comparison of the human and machine performance).
%
%
Finally, we consider that only the car driven by our system moves in the environment. 

\subsubsection{Results}
First, we focus on a single team of one machine $\mac$ and one human model $\hum$, with $\sigma_{\hum} = 2$, and use Algorithm~\ref{alg:ucrl2-mc} to find 
a sequence of switching policies with sublinear regret. 
At the beginning of each episode, the initial traffic level $\gamma_0$ is sampled uniformly at random.
%

We look at the trajectories induced by the switching policies found by our algorithm across different episodes for diffe\-rent values of the switching 
cost $c_{x}$ and cost of human control $c_c(\hum)$\footnote{\scriptsize Here, we assume the cost of machine control $c_c(\mac) = 0$.}. Figure~\ref{fig:trajectories-single-team} 
summarizes the results, which show that, in the latter episodes, the algorithm has learned to rely on the machine (blue segments) whenever the traffic level
is low and switches to the human driver when the traffic level increases.
%
Moreover, whenever the amount of human control and number of switches is not penalized (\ie, $c_{x} = c_c(\hum) = 0$), the algorithm switches to the human 
more frequently whenever the traffic level is high to reduce the environment cost. See Appendix~\ref{app:hum-control} for a comparison of human control rate in environments with different initial traffic levels.

In addition, we compare the performance achieved by Algorithm~\ref{alg:ucrl2-mc} with three baselines: 
(i) a variant of UCRL2~\citep{jaksch2010near} adapted to our finite horizon setting (see Appendix~\ref{app:ucrl2}), 
(ii) a human agent, and (iii) a machine agent.
As a measure of performance, we use the total regret, as defined in Eq.~\ref{eq:total-regret-single}.
%
%
Figure~\ref{fig:single-team-regret} summarizes the results for two different values of switching cost $c_x$ and cost of human 
control $c_c(\hum)$. The results show that both our algorithm and UCRL2 achieve sublinear regret with respect to the number 
of time steps and their performance is comparable in agreement with Theorem~\ref{thm:total-regret}. 
In contrast, whenever the human or the machine drive on their own, they suffer linear regret, due to a lack of exploration. 
%

%
%

\begin{figure}
     \centering
     \subfloat[$c_c(\hum) = 0.2, c_{x} = 0.1$]{
	\includegraphics[scale=0.9]{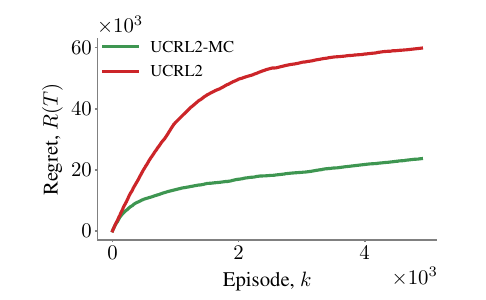}
	 \label{fig:multiple-team-regret.a}
     } 
     \subfloat[$c_c(\hum) = 0, c_{x} = 0$]{
	\includegraphics[scale=0.9]{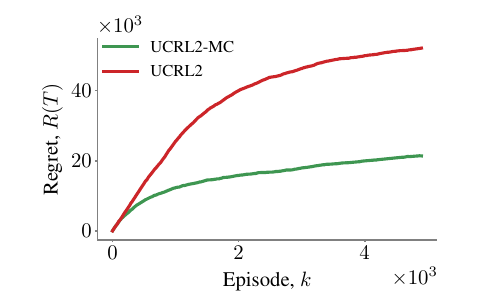}
	\label{fig:multiple-team-regret.b}
      }
\caption{
Total regret of the trajectories induced by the switching policies found by $N$ instances of Algorithm~\ref{alg:ucrl2-mc} and those induced by $N$ instances of 
a variant of UCRL2 in a setting with $N$ team of agents.
In both panels, each instance of Algorithm~\ref{alg:ucrl2-mc} shares the same confidence set for the environment transition probabilities and we run $K = 5000$ episodes. 
%
%
%
The sequence of policies found by Algorithm~\ref{alg:ucrl2-mc} outperform those found by the variant of UCRL2 in terms of total regret, in agreement 
with Corollary~\ref{cor:regret}.
} 
\label{fig:multiple-team-regret}
\vspace{-0.5em}
\end{figure}
Next, we consider $N = 10$ independent teams of agents, $\{\Dcal_i\}_{i=1}^{N}$, operating in a similar lane driving environment. Each 
team $\Dcal_i$ is composed of a different human model $\hum_i$, with $\sigma_{\hum_i}$ sampled uniformly from $(0, 4)$, and the same machine 
driver $\mac$. 
Then, to find a sequence of switching policies for each of the teams, we run $N$ instances of Algorithm~\ref{alg:ucrl2-mc} with shared confidence set 
for the environment transition probabilities.

We compare the performance of our algorithm against the same variant of UCRL2 used in the experiments with a single team of agents in terms of the 
total regret defined in Eq.~\ref{eq:total-regret-multiple}. 
Here, note that the variant of UCRL2 does not maintain a shared confidence set for the environment transition probabilities across teams but instead creates 
a confidence set for the conditional probability defined by Eq.~\ref{eq:dynamics-standard-mdp} for each team. 
Figure~\ref{fig:multiple-team-regret} summarizes the results for a sequence for diffe\-rent values of the switching cost $c_{x}$ and cost of human control 
$c_c(\hum)$, which shows that, in agreement with Corollary~\ref{cor:regret}, our method outperforms UCRL2 significantly. 

\subsection{RiverSwim}
In addition to the obstacle avoidance task, we consider the standard task of \textit{RiverSwim}~\citep{strehl2008analysis}. The MDP states and transition probabilities are shown in Figure~\ref{fig:riverswim}. The cost of taking action in states $s_2$ to $s_5$ equals $1$, while $0.995$ and $0$ for states $s_1$ and $s_6$, respectively. Each episode ends after $L=20$ steps. We set the switching cost and cost of agent control to zero for all the simulations in this section, i.e., $c_x(\cdot, \cdot) = c_c(\cdot) = 0$.
\begin{figure}[t]
\centering
\includegraphics[width=0.8\textwidth]{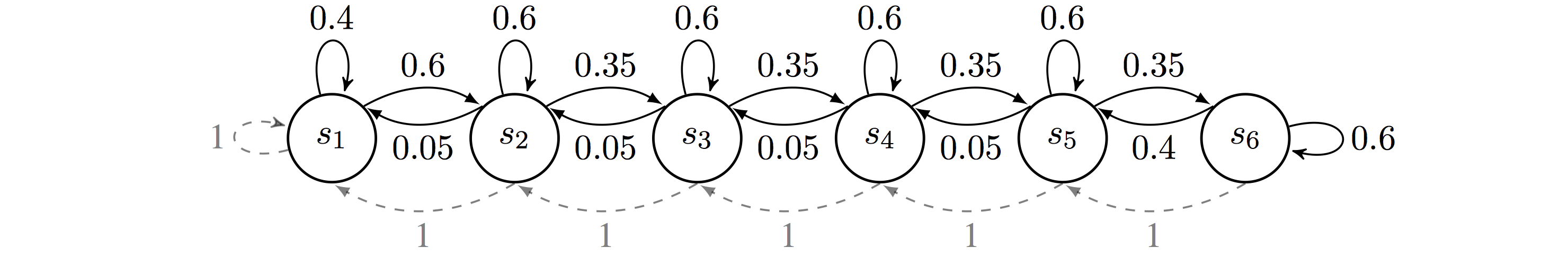}
\caption{RiverSwim. Continuous (dashed) arrows show the transitions after taking actions \texttt{right}  (\texttt{left}). The optimal policy is to always take action \texttt{right}.}
\label{fig:riverswim}
\vspace{-1em}
\end{figure}
The set $\mathcal{D}$ consists of agents that choose action \texttt{right} with some probability value $p$, which may differ for different agents. In the following part, we investigate the effect of increasing the number of teams on the regret bound in the multiple teams of agents setting. See Appendix~\ref{app:additional-exp} for more simulations to study the impact of action size and number of agents in each team on the total regret.
\subsubsection{Results}
We consider $N$ independent teams of agents, each consisting of two agents with the probability $p$ and $1-p$ of choosing action \texttt{right}, where $p$ is chosen uniformly at random for each team. We run the simulations for $N = \{3, 4, \cdots, 10\}$ teams of agents. For each $N$, we run both UCRL2-MC and UCRL2 for $20{,}000$ episodes and repeat each experiment $5$ times. Figure~\ref{fig:ratio-team} (a) summarizes our results, showing the advantage of the shared confidence bounds on the environment transition probabilities in our algorithm against its problem-agnostic version. To better illustrate the performance of UCRL2-MC, we also run an experiment with $N = 100$ teams of agents for $10{,}000$ episodes and compare the total regret of our algorithm to UCRL2. Figure~\ref{fig:ratio-team} (b) shows that our algorithm significantly outperforms UCRL2.
\begin{figure}[t]
\centering
\subfloat[]{\includegraphics[scale=0.9]{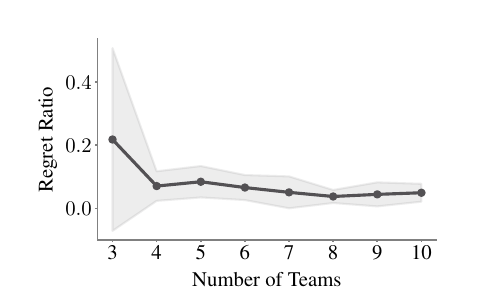}}
\subfloat[]{\includegraphics[scale=0.9]{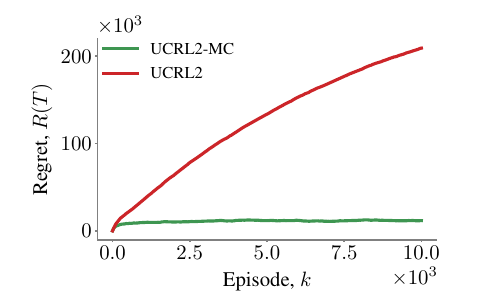}}
\caption{(a) Ratio of UCRL2-MC regret to UCRL2 for different number of teams. (b)  Total regret of the trajectories induced by the switching policies found by UCRL2-MC and those induced by UCRL2 in a setting with $N = 100$ team of agents.}
\label{fig:ratio-team}
\end{figure}

\vspace{-2mm}
\section{Conclusions and Future Work}
\label{sec:conclusions}
We have formally defined the problem of learning to switch control among agents in a team via a 2-layer 
Markov decision process and then developed UCRL2-MC, an online learning algorithm with desirable 
provable guarantees.
Moreover, we have performed a variety of simulation experiments on the standard RiverSwim task and obstacle avoidance to illustrate our theoretical results
and demonstrate that, by exploiting the specific structure of the problem, our proposed algorithm is superior to problem-agnostic
algorithms. 
%

Our work opens up many interesting avenues for future work. 
For example, we have assumed that the agents'{} policies are fixed. However, there are reasons to believe that 
simultaneously optimizing the agents'{} policies and the switching policy may lead to superior performance~\citep{de2020regression, de2020classification, wilder2020learning, wu2020model}.
In our work, we have assumed that the state space is discrete and the horizon in finite. It would be very interesting to lift these
assumptions and develop approximate value iteration methods to solve the learning to switch problem.
%
%
Finally, it would be interesting to evaluate our algorithm using real human agents in a variety of tasks.

\xhdr{Acknowledgments} Gomez-Rodriguez acknowledges support from the European Research Council (ERC) under the European Union’s Horizon 2020 research and innovation programme (grant agreement No. 945719).


\bibliography{refs}
\bibliographystyle{tmlr}

\clearpage
\newpage

\appendix

\newcommand{\pisd}{\pi_t(s,d)}

\section{Proofs} \label{app:proofs}

\subsection{Proof of Theorem~\ref{thm:opt-unknown}}
We first define 
$\Pcal^k_{\Dcal|., t^+} := \times_{s\in \Scal, d\in \Dcal, t'\in\{t, \dots, L\}}\Pcal^k_{.|d, s, t'}$, 
$\Pcal^k_{|., t^+} = \times_{s\in \Scal, a \in \Acal, t'\in\{t, \dots, L\}}\Pcal^k_{|s, a, t'}$ and
$\pi_{t^+} =\{ \pi_{t}, \ldots, \pi_{L}\}$.
Next, we get a lower bound the optimistic value function $v^k_t(s, d)$ as follows: 
\begin{align}
  &  v_t^k(s, d)\nn  \\
   & = \min_{\pi} \min_{P_{\Dcal}\in\Pcal^k_{\Dcal}}\min_{P\in \Pcal^k} V^\pi_{t|P_{\Dcal}, P}(s, d)\nn\\
      & = \min_{\pi_{t+}} \min_{P_{\Dcal}\in\Pcal^k_{\Dcal}}\min_{P\in \Pcal^k} V^\pi_{t|P_{\Dcal}, P}(s, d)\nn\\
    %
    &\ost{(i)}{=} \min_{\pi_t (s,d)} \quad  \min_{p_{\pisd}(.|s, t)\in\Pcal^k_{\cdot \given \pisd, s, t}} \quad \min_{p(.|s, ., t)\in \Pcal^k _{\cdot \given s,\cdot,t}}\nn\\  
    &\qquad      \min_{\substack{\pi_{(t+1)+}  \\ {P_{\Dcal} \in \Pcal^k_{{\Dcal} \given \cdot ,{(t+1)}^+}} \\ {P\in \Pcal^k_{\cdot \given \cdot, {(t+1)}^+}} }}   \left[c_{\pisd}(s, d) 
    + \EE_{a \sim p_{\pisd}(\cdot \given s, t)}
    \left(c_e(s, a) +\EE_{s' \sim p(\cdot \given s, a ,t)} V^\pi_{t+1|P_{\Dcal}, P}(s', \pisd) \right)\right]\nn\\
    &\ost{(ii)}{\geq} \min_{\pi_t(s, d)} \quad \min_{p_{\pi_t(s, d)}(.|s, t)\in\Pcal^k_{\cdot \given \pi_t(s, d), s, t}}\quad \min_{p(.|s, ., t)\in \Pcal^k _{\cdot \given s,\cdot,t}}\bigg[ c_{\pi_t(s, d)}(s, d) \nn\\
    &\qquad \left.+ \EE_{a\sim p_{\pi_t(s, d)}(\cdot \given s, t)}\left(c_e(s, a) +\EE_{s' \sim p(\cdot \given s, a ,t)}\left[\min_{\pi_{(t+1)^+}}\min_{P_{\Dcal} \in \Pcal^k_{{\Dcal} \given \cdot ,{(t+1)}^+}}\min_{P\in \Pcal^k_{\cdot \given \cdot, {(t+1)}^+}}V^\pi_{t+1|P_{\Dcal}, P}(s', \pi_t(s, d))\right]\right)\right]\nn\\
    & = \min_{d_t} \Bigg[c_{d_t}(s, d) + \left. \min_{p_{d_t}(.|s, t)\in\Pcal^k_{\cdot|d_t, s, t}} \sum_{a \in \Acal} p_{d_t}(a|s, t)\cdot\left(c_e(s, a) + \min_{p(.|s, a, t)\in \Pcal^k_{\cdot \given s, a, t}} \EE_{s'\sim p(\cdot\given s, a, t)} v^k_{t+1}(s', d_t)\right)\right] \nn,
\end{align}
where (i) follows from Lemma~\ref{lem:bellman} and (ii) follows from the fact that $\min_a\EE[X(a)] \geq E[\min_a X(a)]$. Next, we provide an upper bound of the optimistic value function $v^k_t(s, d)$ as follows:
\begin{align}
        &v_t^k(s, d)\nn\\
        &= \min_{\pi} \min_{P_{\Dcal}\in\Pcal^k_{\Dcal}}\min_{P\in \Pcal^k} V^\pi_{t|P_{\Dcal}, P}(s, d)\nn\\
    %
    &\ost{(i)}{=} \min_{\pi_t} \quad  \min_{p_{\pisd}(.|s, t)\in\Pcal^k_{\cdot \given \pisd, s, t}} \quad \min_{p(.|s, ., t)\in \Pcal^k _{\cdot \given s,\cdot,t}}\nn\\  
    &\qquad      \min_{\substack{\pi_{(t+1)+}  \\ {P_{\Dcal} \in \Pcal^k_{{\Dcal} \given \cdot ,{(t+1)}^+}} \\ {P\in \Pcal^k_{\cdot \given \cdot, {(t+1)}^+}} }}   \left[c_{\pisd}(s, d) 
    + \EE_{a \sim p_{\pisd}(\cdot \given s, t)}
    \left(c_e(s, a) +\EE_{s' \sim p(\cdot \given s, a ,t)} V^\pi_{t+1|P_{\Dcal}, P}(s', \pisd) \right)\right]\nn\\
        %
    &\ost{(ii)}{\le} \min_{\pi_t (s,d)} \quad  \min_{p_{\pisd}(.|s, t)\in\Pcal^k_{\cdot \given \pisd, s, t}}  \nn\\ 
    & \hspace{2cm} \min_{p(.|s, ., t)\in \Pcal^k _{\cdot \given s,\cdot,t}}   \bigg[c_{\pisd}(s, d)
 \left.    + \EE_{a \sim p_{\pisd}(\cdot \given s, t)}
    \left(c_e(s, a) +\EE_{s' \sim p(\cdot \given s, a ,t)} V^{\pi^*}  _{t+1|P^* _{\Dcal}, P^*}(s', \pisd) \right)\right]\nn\\
            &\ost{(iii)}{=} \min_{\pi_t (s,d)} \quad  \min_{p_{\pisd}(.|s, t)\in\Pcal^k_{\cdot \given \pisd, s, t}}  \nn\\ 
    & \hspace{2cm} \min_{p(.|s, ., t)\in \Pcal^k _{\cdot \given s,\cdot,t}}   \bigg[c_{\pisd}(s, d)
 \left.    + \EE_{a \sim p_{\pisd}(\cdot \given s, t)}
    \left(c_e(s, a) +\EE_{s' \sim p(\cdot \given s, a ,t)} v^k_{t+1}(s', \pisd) \right)\right]\nn\\
%
        %
    & = \min_{d_t} \Bigg[c_{d_t}(s, d) + \left. \min_{p_{d_t}(.|s, t)\in\Pcal^k_{\cdot|d_t, s, t}} \sum_{a \in \Acal} p_{d_t}(a|s, t)\cdot\left(c_e(s, a) + \min_{p(.|s, a, t)\in \Pcal^k_{\cdot \given s, a, t}} \EE_{s'\sim p(\cdot\given s, a, t)} v^k_{t+1}(s', d_t)\right)\right] \nn.
\end{align}
Here, (i) follows from Lemma~\ref{lem:bellman}, (ii) follows from the fact that:
\begin{align}
  &\min_{\substack{\pi_{(t+1)+}  \\ {P_{\Dcal} \in \Pcal^k_{{\Dcal} \given \cdot ,{(t+1)}^+}} \\ {P\in \Pcal^k_{\cdot \given \cdot, {(t+1)}^+}} }}   \left[c_{\pisd}(s, d)  +
  \EE_{a\sim p_{\pisd}(\cdot \given s, t)} 
    \left(c_e(s, a) +\EE_{s' \sim p(\cdot \given s, a ,t)} V^\pi_{t+1|P_{\Dcal}, P}(s', \pisd) \right)\right] \nn\\
   &\qquad \qquad\qquad \le \left[c_{\pisd}(s, d) 
    + \EE_{a\sim p_{\pisd}(\cdot \given s, t)}
    \left(c_e(s, a) +\EE_{s' \sim p(\cdot \given s, a ,t)} V^\pi_{t+1|P_{\Dcal}, P}(s', \pisd) \right)\right]  \nn\\
   &\qquad \qquad \qquad\qquad\qquad\ \forall \pi, P_{\Dcal} \in \Pcal^k_{{\Dcal} \given \cdot ,{(t+1)}^+}, P\in \Pcal^k_{\cdot \given \cdot, {(t+1)}^+}
\end{align}
and if we set  $\pi_{(t+1)+} = \{\pi^*_{t+1},...,\pi^* _{L}\}$, $ P_{\Dcal} = P^*_{\Dcal} \in \Pcal^k_{\Dcal \given \cdot ,{(t+1)}^+}$ and $P = P^*\in \Pcal^k_{\given \cdot, {(t+1)}^+}$, where
\begin{align}
   &\{\pi^*_{t+1},...,\pi^* _{L}\}, P^*_{\Dcal},  P^* = \argmin_{\substack{\pi_{(t+1)+}  \\ {P_{\Dcal} \in \Pcal^k_{{\Dcal} \given \cdot ,{(t+1)}^+}} \\ {P\in \Pcal^k_{\cdot \given \cdot, {(t+1)}^+}} }}  V^\pi_{t+1|P_{\Dcal}, P}(s', \pisd), 
\end{align}
then equality (iii) holds.
%
Since the upper and lower bounds are the same, 
we can conclude that the optimistic value function satisfies Eq.~\ref{eq:bellman-final}, which completes the proof.

\subsection{Proof of Theorem~\ref{thm:total-regret}} 
%
%
In this proof, we assume that $c_e(s,a)+c_c(d)+c_x(d,d') < 1$ for all $s\in \Scal, a\in \Acal$ and $d,d' \in \Dcal$.
Throughout the proof, we will omit the subscripts $P_{\Dcal}^*, P^*$ in $V_{t\given P_{\Dcal}^*, P^*}$ and write $V_{t}$ instead in case of true agent policies $P_{\Dcal}^*$ and true transition probabilities $ P^*$.
Then, we define the following quantities:
\begin{align}
      P^k_{\Dcal}& = \argmin_{P_{\Dcal} \in \Pcal^k_\Dcal(\delta)} \min_{P \in \Pcal^k(\delta)} V^{\pi^k}_{1|P_{\Dcal}, P}(s_1, d_0), \label{eq:help-optimal-dynamics1}\\
     P^k & = \argmin_{P \in \Pcal^k(\delta)} V^{\pi^k}_{1|P^k_{\Dcal}, P}(s_1, d_0), \label{eq:help-optimal-dynamics2}\\
      \Delta_k & = V_1^{\pi^k}(s_1, d_0) - V_1^{\pi^*}(s_1, d_0), \label{eq:Delta-def} 
\end{align}
where, recall from Eq.~\ref{eq:learning-to-switch-unknown} that, $\pi^k =\argmin_{\pi} \min_{P_{\Dcal} \in \Pcal^k _{\Dcal}}, \min_{P\in\Pcal^k} V^\pi _{1| P_{\Dcal,P}} (s_1,d_0)$; and, $\Delta_k$ indicates the regret for episode $k$. Hence, we have
\begin{align}
    R(T) = \sum_{k=1}^K \Delta_k = \sum_{k=1}^K \Delta_k\II(P^*_{\Dcal} \in \Pcal_{\Dcal}^k \land P^* \in \Pcal^k) +  \sum_{k=1}^K \Delta_k\II(P^*_{\Dcal} \not\in \Pcal_{\Dcal}^k \lor P^* \not\in \Pcal^k)
\end{align}
Next, we split the analysis into two parts. We first bound  $\sum_{k=1}^K \Delta_k\II(P^*_{\Dcal} \in \Pcal_{\Dcal}^k \land P^* \in \Pcal^k)$ and then bound $ \sum_{k=1}^K \Delta_k\II(P^*_{\Dcal} \not\in \Pcal_{\Dcal}^k \lor P^* \not\in \Pcal^k)$.

\noindent --- \emph{Computing the bound on $\sum_{k=1}^K \Delta_k\II(P^*_{\Dcal} \in \Pcal_{\Dcal}^k \land P^* \in \Pcal^k)$}

First, we note that 
\begin{align}
    \Delta_k = V^{\pi^k}_1(s_1, d_0) - V^{\pi^*}_1(s_1, d_0) \leq V^{\pi^k}_1(s_1, d_0) - V_{1|P^k_{\Dcal}, P^k}^{\pi^k}(s_1, d_0)
\end{align}
This is because 
\begin{align}
    V_{1|P^k_{\Dcal}, P^k}^{\pi^k}(s_1, d_0) \ost{(i)}{=} \min_{\pi} \min_{P_{\Dcal} \in \Pcal^k _{\Dcal}} \min_{P \in \Pcal^k  } V^\pi_{1|P_{\Dcal}, P}(s_{1}, d_0) \ost{(ii)}{\leq} \min_{\pi} V^\pi_{1|P^*_{\Dcal}, P^*}(s_{1}, d_0) = V^{\pi^*}_1(s_{1}, d_0),
\end{align}
where (i) follows from Eqs.~\ref{eq:help-optimal-dynamics1},~\ref{eq:help-optimal-dynamics2}, and (ii) holds because of the fact that the true transition probabilities $P^*_{\Dcal} \in \Pcal^k_{\Dcal}$ and $P^* \in \Pcal^k$.
Next, we use Lemma~\ref{lemma:upper-exp} (Appendix~\ref{app:useful-lemmas}) to bound $\sum_{k=1}^K (V^{\pi^k}_1 (s_1, d_0)- V_{1|P^k_{\Dcal}, P^k}^{\pi^k}(s_{1}, d_0))$.
\begin{align}
    \sum_{k=1}^K (V^{\pi^k}_1 (s_1,d_0) - V_{1|P^k_{\Dcal}, P^k}^{\pi^k}(s_1, d_0)) &\leq \sum_{k=1}^K L \EE\left[\sum_{t=1}^L\min\{1, \beta^k_{\Dcal}(s_t, d_t, \delta)\} + \sum_{t=1}^L\min\{1, \beta^k(s_t, a_t \delta)\}\bigg|s_1, d_0\right]
\end{align}
Since by assumption,  $c_e(s,a)+c_c(d)+c_x(d,d') < 1$ for all $s\in \Scal, a\in \Acal$ and $d,d' \in \Dcal$,
the worst-case regret is bounded by $T$. Therefore, we have that:
\begin{align}
        \sum_{k=1}^K \Delta_k\II(P^*_{\Dcal} \in \Pcal_{\Dcal}^k \land P^* \in \Pcal^k) &\leq \min\left\{T, \sum_{k=1}^K L \EE\left[\sum_{t=1}^L\min\{1, \beta^k_{\Dcal}(s_t, d_t, \delta)\}|s_1 , d_0\right]\right.\nn\\
        &\left.\qquad \qquad + \sum_{k=1}^K L \EE\left[\sum_{t=1}^L\min\{1, \beta^k(s_t, a_t, \delta)\}|s_1, d_0\right] \right\} \nn\\
        & \le \min\left\{T, \sum_{k=1}^K L \EE\left[\sum_{t=1}^L\min\{1, \beta^k_{\Dcal}(s_t, d_t, \delta)\}|s_1 , d_0\right]\right\}\nn\\
        &\qquad \qquad+ \min\left\{T,\sum_{k=1}^K L \EE\left[\sum_{t=1}^L\min\{1, \beta^k(s_t, a_t, \delta)\}|s_1, d_0\right] \right\},          
        \label{ineq:split-reg}
\end{align}
where, the last inequality follows from Lemma~\ref{lem:minlemma}.
Now, we aim to bound the first term in the RHS of the above inequality.
\begin{align}
    \sum_{k=1}^K L \EE\left[\sum_{t=1}^L\min\{1, \beta^k_{\Dcal}(s_t, d_t, \delta)\}\big|s_1, d_0\right] 
    &\ost{(i)}{=} L\sum_{k=1}^K \EE\left[\sum_{t=1}^L\min\left\{1, \sqrt{\frac{2 \log\left(\frac{((k-1)L)^7|\Scal||\Dcal|2^{|\Acal|+1}}{\delta}\right)}{\max\{1, N_k(s_t, d_t)\}}}\right\}\bigg|s_1, d_0\right]\nn\\
    &\ost{(ii)}{\leq} L\sum_{k=1}^K \EE\left[\sum_{t=1}^L\min\left\{1, \sqrt{\frac{2 \log\left(\frac{(KL)^7|\Scal||\Dcal|2^{|\Acal|+1}}{\delta}\right)}{\max\{1, N_k(s_t, d_t)\}}}\right\}\right]\nn\\
    %
    %
    & \ost{(iii)}{\leq} 2\sqrt{2}L\sqrt{2 \log\left(\frac{(KL)^7|\Scal||\Dcal|2^{|\Acal|+1}}{\delta}\right)|\Scal||\Dcal|KL}\nn\\
    &\quad + 2L^2|\Scal||\Dcal|\\
    & \leq 2\sqrt{2} \sqrt{14|\Acal| \log\left(\frac{(KL)|\Scal||\Dcal|}{\delta}\right)|\Scal||\Dcal|KL}  + 2L^2|\Scal||\Dcal|\nn\\
    & = \sqrt{112} \sqrt{|\Acal| \log\left(\frac{(KL)|\Scal||\Dcal|}{\delta}\right)|\Scal||\Dcal|KL}  + 2L^2|\Scal||\Dcal| \label{ineq:help}
\end{align}
where (i) follows by replacing $\beta_{\Dcal}^k(s_t, d_t, \delta)$ with its definition, (ii) follows by the fact that $(k-1)L \leq KL$, (iii) follows from Lemma~\ref{lemma:cntbound}, in which, we put  $\Wcal := \Scal \times \Dcal$,
    $c := \sqrt{2 \log\left(\frac{(KL)^7|\Scal||\Dcal|2^{|\Acal|+1}}{\delta}\right)}$,
    $\Tau_k = (w_{k, 1}, \dots, w_{k, L}) := ((s_1, d_1), \dots, (s_L, d_L))$.  
Now, due to Eq.~\ref{ineq:help}, we have the following.
\begin{align}
\min&\left\{T,\sum_{k=1}^K L \EE\left[\sum_{t=1}^L\min\{1, \beta^k_{\Dcal}(s_t, d_t, \delta)\}|s_1, d_0 \right]\right\} 
{\leq} \min\left\{T,\sqrt{112} L \sqrt{|\Acal||\Scal||\Dcal|T\log\left(\frac{T|\Scal||\Dcal|}{\delta}\right)} + 2L^2|\Scal||\Dcal|\right\}\label{eq:minhelp}
\end{align}
Now, if $T \leq 2L^2|\Scal||\Acal||\Dcal|\log\left(\frac{T|\Scal||\Dcal|}{\delta}\right)$,
\begin{align}
T^2 \leq 2L^2|\Scal||\Acal||\Dcal|T\log\left(\frac{T|\Scal||\Dcal|}{\delta}\right) \implies
T \leq \sqrt{2}L\sqrt{|\Scal||\Acal||\Dcal|T \log\left(\frac{T|\Scal||\Dcal|}{\delta}\right)}\nn
\end{align}
and if $T > 2L^2|\Scal||\Acal||\Dcal|\log\left(\frac{T|\Scal||\Dcal|}{\delta}\right)$,
\begin{align}
2L^2|\Scal| < \frac{\sqrt{2L^2|\Scal||\Acal||\Dcal|T\log\left(\frac{T|\Scal||\Dcal|}{\delta}\right)}}{|\Acal||\Dcal|\log\left(\frac{T|\Scal||\Dcal|}{\delta}\right)} \leq \sqrt{2}L\sqrt{|\Scal||\Acal||\Dcal|T\log\left(\frac{T|\Scal||\Dcal|}{\delta}\right)}.
\end{align}
Thus, the minimum in Eq. \ref{eq:minhelp} is less than
\begin{align}
(\sqrt{2} + \sqrt{112}) L \sqrt{|\Scal||\Acal||\Dcal|T\log\left(\frac{|\Scal||\Dcal|T}{\delta}\right)} < 12  L \sqrt{|\Scal||\Acal||\Dcal| T\log\left(\frac{|\Scal||\Dcal|T}{\delta}\right)} \label{eq:firstineq}
\end{align}

A similar analysis can be done for the second term of the RHS of Eq.~\ref{ineq:split-reg}, which would show that,

\begin{align}
\min\left\{T,\sum_{k=1}^K L \EE\left[\sum_{t=1}^L\min\{1, \beta^k(s_t, a_t, \delta)\}|s_1, d_0\right] \right\} 
\le    12L |\Scal|\sqrt{|\Acal|T \log\left(\frac{T|\Scal||\Acal|}{\delta}\right)} \label{ineq:second}.
\end{align}
Combining Eqs.~\ref{ineq:split-reg},~\ref{eq:firstineq} and~\ref{ineq:second}, we can bound the first term of the total regret as follows:
\begin{align}
    \sum_{k=1}^K \Delta_k\II(P^*_{\Dcal} \in \Pcal_{\Dcal}^k \land P^* \in \Pcal^k) \leq 12  L \sqrt{|\Acal||\Scal||\Dcal|T \log\left(\frac{T|\Scal||\Dcal|}{\delta}\right)} + 12L|\Scal|\sqrt{|\Acal| T \log\left(\frac{T|\Scal||\Acal|}{\delta}\right)}
    \label{eq:first-term}
\end{align}
%
\noindent --- \emph{Computing the bound on $\sum_{k=1}^K \Delta_k\II(P^*_{\Dcal} \not\in \Pcal_{\Dcal}^k \lor P^* \not\in \Pcal^k)$}

Here, we use a similar approach to ~\cite{jaksch2010near}. Note that
\begin{align}
    \sum_{k=1}^K \Delta_k\II(P^*_{\Dcal} \not\in \Pcal_{\Dcal}^k \lor P^* \not\in \Pcal^k)= \sum_{k=1}^{\left\lfloor\sqrt{\frac{K}{L}}\right\rfloor} \Delta_k\II(P^*_{\Dcal} \not\in \Pcal_{\Dcal}^k \lor P^* \not\in \Pcal^k) + \sum_{k=\left\lfloor\sqrt{\frac{K}{L}}\right\rfloor+1}^{K} \Delta_k\II(P^*_{\Dcal} \not\in \Pcal_{\Dcal}^k \lor P^* \not\in \Pcal^k).
\end{align}
Now, our goal is to show the second term of the RHS of above equation vanishes with high probability. If we succeed, then it holds that, with high probability, $\sum_{k=1}^K  \Delta_k\II(P^*_{\Dcal} \not\in \Pcal_{\Dcal}^k \lor P^* \not\in \Pcal^k)$ equals the first term of the RHS and then we will be done because
\begin{align}
    \label{eq:hhh10}
    \sum_{k=1}^{\left\lfloor\sqrt{\frac{K}{L}}\right\rfloor} \Delta_k\II(P^*_{\Dcal} \not\in \Pcal_{\Dcal}^k \lor P^* \not\in \Pcal^k) \leq \sum_{k=1}^{\left\lfloor\sqrt{\frac{K}{L}}\right\rfloor} \Delta_k \stackrel{(i)}{\leq}  \left\lfloor\sqrt{\frac{K}{L}}\right\rfloor L = \sqrt{KL},
\end{align}
where (i) follows from the fact that $\Delta_k \leq L$ since we assumed the cost of each step $c_e(s, a) + c_c(d) + c_x(d, d') \leq 1$ for all $s\in \Scal$, $a \in \Acal$, and $d, d' \in \Dcal$.

To prove that $\sum_{k=\left\lfloor\sqrt{\frac{K}{L}}\right\rfloor+1}^{K} \Delta_k\II(P^*_{\Dcal} \not\in \Pcal_{\Dcal}^k \lor P^* \not\in \Pcal^k) = 0$ with high probability, we proceed as follows. By applying Lemma \ref{lemma:p2lm} to $P^*_{\Dcal}$ and $P^*$, we have

\begin{align}
    \text{Pr}(P^*_{\Dcal} \not \in \Pcal^k_{\Dcal}) \leq \frac{\delta}{2{t_k}^6},\; \text{Pr}(P^* \not \in \Pcal^k) \leq \frac{\delta}{2{t_k}^6}
\end{align}
Thus, 
\begin{align}
\label{ineq:addlemmaout}
\prb(P^*_{\Dcal} \not\in \Pcal_{\Dcal}^k \lor P^* \not\in \Pcal^k) \leq \text{Pr}(P^*_{\Dcal} \not \in \Pcal^k_{\Dcal}) + \text{Pr}(P^* \not \in \Pcal^k) \leq \frac{\delta}{{t_k}^6}
\end{align}
where $t_k = (k-1)L$ is the end time of episode $k - 1$. Therefore, it follows that
\begin{align} 
\prb\Bigg(\sum_{k=\left\lfloor\sqrt{\frac{K}{L}}\right\rfloor + 1}^{K} \Delta_k\II(P^*_{\Dcal} \not\in \Pcal_{\Dcal}^k \lor P^* \not\in \Pcal^k) = 0\Bigg)
& = \prb\left(\forall k:  \left\lfloor\sqrt{\frac{K}{L}}\right\rfloor + 1 \leq k \leq K;\; P^*_{\Dcal} \in \Pcal^k_{\Dcal} \land P^* \in \Pcal^k\right) \nn\\ 
& = 1 -\prb\left (\exists k: \left\lfloor\sqrt{\frac{K}{L}}\right\rfloor + 1 \leq k \leq K;\; P^*_{\Dcal} \not \in \Pcal^k_{\Dcal} \lor P^* \not \in \Pcal^k\right)\nn\\
& \ost{(i)}{\geq} 1 - \sum_{k=\left\lfloor\sqrt{\frac{K}{L}}\right\rfloor + 1}^{K}  \prb(P^*_{\Dcal} \not \in \Pcal^k_{\Dcal} \lor P^* \not \in \Pcal^k)\nn  
\end{align}
\begin{align}
& \qquad\qquad\qquad\qquad\qquad\qquad \ost{(ii)}{\geq} 1 - \sum_{k=\left\lfloor\sqrt{\frac{K}{L}}\right\rfloor + 1}^{K}  \frac{\delta}{t_k^6}\nn\\
&\qquad\qquad \qquad\qquad\qquad\qquad\ost{(iii)}{\geq} 1 - \sum_{t=\sqrt{KL}}^{KL}  \frac{\delta}{{t}^6}  \geq 1 - \int_{\sqrt{KL}}^{KL} \frac{\delta}{{t}^6} \geq 1 - \frac{\delta}{5(KL)^{\frac{5}{4}}}.
\end{align}
where (i) follows from a union bound, (ii) follows from Eq.~\ref{ineq:addlemmaout} and (iii) holds using that $t_k = (k-1)L$.
Hence, with probability at least  $1 - \frac{\delta}{5(KL)^{\frac{5}{4}}}$ we have that 
\begin{align}
\sum_{k=\left\lfloor\sqrt{\frac{K}{L}}\right\rfloor + 1}^{K}  \Delta_k\II(P^*_{\Dcal} \not\in \Pcal_{\Dcal}^k \lor P^* \not\in \Pcal^k) = 0.
\end{align}
If we combine the above equation and Eq.~\ref{eq:hhh10}, we can conclude that, with probability at least $1 - \frac{\delta}{5T^{5/4}}$, we have that 
\begin{align}
    \sum_{k=1}^{\left\lfloor\sqrt{\frac{K}{L}}\right\rfloor} \Delta_k\II(P^*_{\Dcal} \not\in \Pcal_{\Dcal}^k \lor P^* \not\in \Pcal^k) \leq \sqrt{T} \label{eq:second-term}
\end{align}
where $T = KL$.
Next, if we combine Eqs.~\ref{eq:first-term} and \ref{eq:second-term}, we have
\begin{align}
    R(T) &= \sum_{k=1}^K \Delta_k\II(P^*_{\Dcal} \in \Pcal_{\Dcal}^k \land P^* \in \Pcal^k) +  \sum_{k=1}^K \Delta_k\II(P^*_{\Dcal} \not\in \Pcal_{\Dcal}^k \lor P^* \not\in \Pcal^k) \nn\\
    &\leq 12  L \sqrt{|\Acal||\Scal||\Dcal|T \log\left(\frac{T|\Scal||\Dcal|}{\delta}\right)}+ 12L|\Scal|\sqrt{|\Acal| T \log\left(\frac{T|\Scal||\Acal|}{\delta}\right)} + \sqrt{T}\nn\\
    & \le 13  L \sqrt{|\Acal||\Scal||\Dcal|T \log\left(\frac{T|\Scal||\Dcal|}{\delta}\right)} + 12L|\Scal|\sqrt{|\Acal| T \log\left(\frac{T|\Scal||\Acal|}{\delta}\right)}
\end{align}
Finally, since $\sum_{T=1}^\infty \frac{\delta}{5T^{5/4}} \leq \delta$, with probability at least $1 - \delta$, the above inequality holds. This concludes the proof.
%
%
%
%

%
%
\section{Useful lemmas} \label{app:useful-lemmas}
\begin{lemma}
\label{lemma:upper-exp}
Suppose $P_{\Dcal}$ and $P$ are true transitions and $P_{\Dcal} \in \Pcal^k_{\Dcal}$, $P \in \Pcal^k$ for episode $k$. Then, for arbitrary policy $\pi^k$, and arbitrary $P_{\Dcal}^k \in \Pcal^k_{\Dcal}$, $P^k \in \Pcal^k$, it holds that
\begin{align}
    V^{\pi^k}_{1 \given P_{\Dcal}, P}(s, d) - V^{\pi^k}_{1 \given P^k_{\Dcal}, P^k}(s, d) \leq L \EE\left[\sum_{t=1}^L\min\{1, \beta^k_{\Dcal}(s_t, d_t, \delta)\} + \sum_{t=1}^H\min\{1, \beta^k(s_t, a_t, \delta)\} \given s_1 = s, d_0 = d\right],
\end{align}
where the expectation is taken over the MDP with policy $\pi^k$ under true transitions $P_{\Dcal}$ and $P$.
\end{lemma}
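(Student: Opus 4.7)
The statement is a simulation-style value-difference lemma, so my plan is to telescope the difference $V^{\pi^k}_{1|P_{\Dcal},P}(s,d) - V^{\pi^k}_{1|P^k_{\Dcal},P^k}(s,d)$ step by step along the trajectory induced by $\pi^k$ under the true dynamics, converting each layer of the 2-layer MDP into a one-step error plus a contracted residual. Concretely, I would define $\Delta_t(s,d) := V^{\pi^k}_{t|P_{\Dcal},P}(s,d) - V^{\pi^k}_{t|P^k_{\Dcal},P^k}(s,d)$ and write the Bellman recursion in the 2-layer form appearing in Theorem~\ref{thm:opt-unknown}: first the agent-choice layer transitions according to $p_{d_t}(\cdot\given s,t)$ (true) vs $p^k_{d_t}(\cdot\given s,t)$ (optimistic), and then the environment layer transitions according to $p(\cdot\given s,a,t)$ vs $p^k(\cdot\given s,a,t)$. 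Adding and subtracting a hybrid value function that uses $P_{\Dcal},P$ at step $t$ but $P^k_{\Dcal},P^k$ thereafter isolates a one-step error at time $t$.

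The one-step error decomposes, via Hölder's inequality, into two pieces of the form $\|V^{\pi^k}_{t+1|P^k_{\Dcal},P^k}\|_\infty \cdot \|p_{d_t}(\cdot\given s,t) - p^k_{d_t}(\cdot\given s,t)\|_1$ and analogously for the environment transition. Since the per-step cost is bounded by $1$ by assumption, $\|V^{\pi^k}_{t+1|\cdot,\cdot}\|_\infty \le L$. For the $L^1$ deviation, both $p_{d_t}$ and $p^k_{d_t}$ lie in $\Pcal^k_{\cdot\given d_t,s,t}$, so by the triangle inequality through the empirical distribution $\hat{p}^k_{d_t}$ we get $\|p_{d_t}(\cdot\given s,t) - p^k_{d_t}(\cdot\given s,t)\|_1 \le 2\beta^k_{\Dcal}(s,d_t,\delta)$, and the analogous estimate holds for the environment layer with $2\beta^k(s,a,\delta)$. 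I would clip by the trivial bound $\|p-q\|_1 \le 2$ to obtain $\min\{1,\beta^k_{\Dcal}\}$ and $\min\{1,\beta^k\}$ after absorbing constants into the leading $L$.

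Iterating the recursion $\Delta_t = (\text{one-step error at }t) + \EE[\Delta_{t+1}]$, where the inner expectation is taken under the true transitions along $\pi^k$, yields
\begin{align*}
\Delta_1(s,d) \le L\,\EE\!\left[\sum_{t=1}^{L} \min\{1,\beta^k_{\Dcal}(s_t,d_t,\delta)\} + \sum_{t=1}^{L}\min\{1,\beta^k(s_t,a_t,\delta)\} \,\Big|\, s_1=s,\,d_0=d\right],
\end{align*}
as claimed. The induction can be formalized cleanly by showing that $\Delta_{L+1}\equiv 0$ and that each unfolding adds exactly one term of the claimed form to the expectation.

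The routine parts are the Bellman unrolling and the Hölder step; the main technical care lies in (a) making the hybrid/coupling argument at each layer of the 2-layer MDP precise, so that the expectations in the telescoping sum are all taken under the \emph{true} dynamics $P_{\Dcal},P$ along $\pi^k$ (which is exactly what the right-hand side requires), and (b) bookkeeping the factor of $L$ that multiplies the sum, which comes from bounding $\|V\|_\infty$ at every depth of the recursion. I do not expect any essential obstacle beyond carefully writing the two-layer version of the standard simulation lemma.
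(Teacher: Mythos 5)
Your proposal matches the paper's proof essentially step for step: the paper likewise unrolls the two-layer Bellman recursion (via Lemma~\ref{lem:bellman}), adds and subtracts hybrid terms that mix true and optimistic transitions, bounds the continuation value by $L$, controls the per-step transition deviations through the shared confidence sets, and concludes by induction over $t$. The only cosmetic difference is that the paper bounds the signed one-step error by the positive part of the probability difference (which is half the $L^1$ distance, hence at most $\min\{1,\beta\}$ directly), whereas your H\"older step yields $2\beta$ and requires absorbing a factor of $2$ that the stated constant $L$ does not actually leave room for --- a constant-level detail, not a structural gap.
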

\newcommand{\vkt}{\overline{v}^k _t}
\newcommand{\vktk}{\overline{v}^k _{t\given k}}
\newcommand{\vkone}{\overline{v}^k _1}
\newcommand{\vktwo}{\overline{v}^k _2}
\newcommand{\vkthree}{\overline{v}^k _3}
\newcommand{\vkonek}{\overline{v}^k _{1\given k}}
\newcommand{\vktwok}{\overline{v}^k _{2\given k}}
\newcommand{\vkthreek}{\overline{v}^k _{3\given k}}
\begin{proof}
For ease of notation, let 
$\vkt := V^{\pi^k}_{t \given P_{\Dcal}, P}$, 
$\vktk := V^{\pi^k}_{t \given P^k_{\Dcal}, P^k}$  
%
%
%
and $c_t^\pi(s, d) = c_{\pi^k_t(s, d)}(s, d)$. 
We also define $d' = \pi^k_1(s, d)$. From Eq~\ref{eq:bellman}, we have
\begin{align}
    \vkone (s, d)&= c_1^\pi(s, d) + \sum_{a \in \Acal} p_{\pi^k _1(s, d)}(a \given s) \cdot \left(c_e(s, a) + \sum_{s'\in \Scal} p(s'|s, a) \cdot \vktwo(s', d')\right)\\
    \vkonek (s, d)&= c_1^\pi(s, d) + \sum_{a \in \Acal} p^k _{\pi^k _1(s, d)}(a \given s) \cdot \left(c_e(s, a) + \sum_{s'\in \Scal} p^k(s'|s, a) \cdot \vktwok(s', d')\right)
\end{align}
Now, using above equations, we rewrite $\vkone(s, d) - \vkonek(s, d)$ as
\begin{align}
    \label{eq:value-diff}
    \vkone(s, d) - \vkonek(s, d) & = \sum_{a \in \Acal} p_{\pi^k _1(s, d)}(a|s) \left(c_e(s, a) + \sum_{s'\in \Scal} p(s'|s, a) \cdot \vktwo(s', d')\right)\nn\\
    &\quad - \sum_{a \in \Acal} p^k _{\pi^k _1(s, d)}(a|s) \left(c_e(s, a) + \sum_{s'\in \Scal} p^k(s'|s, a) \cdot \vktwok(s', d')\right) \nn\\
    &\ost{(i)}{=} \sum_{a \in \Acal} p_{\pi^k _1(s, d)}(a\given s) \left(c_e(s, a) + \sum_{s'\in \Scal} p(s'|s, a) \cdot \vktwo(s', d') - c_e(s, a) - \sum_{s'\in \Scal} p^k(s' \given s) \cdot \vktwok(s', d')\right) \nn\\
    &\quad +\sum_{a \in  \Acal} \left(p_{\pi^k _1(s, d)}(a\given s) - p^k _{\pi^k _1(s, d)}(a\given s)\right) \left(\underbrace{c_e(s, a) + \sum_{s'\in \Scal} p^k(s'\given s, a) \cdot \vktwok(s', d')}_{\leq L}\right) \nn\\
    &\ost{(ii)}{\leq} \sum_{a \in \Acal} \left[p_{\pi^k _1(s, d)}(a\given s) \cdot \sum_{s'\in \Scal} \left[p(s'\given s, a) \vktwo(s', d') - p^k(s'\given  s, a) \vktwok(s', d')\right]\right]\nn \\
    &\quad + L \sum_{a \in \Acal} \left[p_{\pi^k _1(s, d)}(a \given s) -  p^k _{\pi^k _1(s, d)}(a\given s)\right]\nn\\
    &\ost{(iii)}{=} \sum_{a \in \Acal} \left[p_{\pi^k _1(s, d)}(a \given s) \cdot \sum_{s'\in \Scal} p(s' \given s, a)\cdot\left(\vktwo(s', d') - \vktwok(s', d')\right)\right]\nn\\
    &\quad + \sum_{a \in \Acal} \left[p_{\pi^k _1(s, d)}(a \given s) \sum_{s'\in \Scal} \left(p(s' \given s, a) - p^k(s' \given s, a)\right) \underbrace{\vktwok(s', d')}_{\leq L}\right]\nn\\
    &\quad + L \sum_{a \in \Acal} \left[p_{\pi^k _1(s, d)}(a \given s) - p^k _{\pi^k _1(s, d)}(a \given s, d)\right]\nn\\
    &\ost{(iv)}{\leq} \EE_{a \sim p_{\pi^k _1(s, d)}(.\given s), s' \sim p(\cdot \given s, a)}\left[\vktwo(s', d') - \vktwok(s', d')\right]\nn\\
    &\quad+ L\EE_{a \sim p_{\pi^k _1(s, d)}(\cdot \given s)}\left[\sum_{s'\in \Scal}\left[p(s' \given s, a) - p^k(s' \given s, a)\right] \right]  + L \sum_{a \in \Acal} \left[p_{\pi^k _1(s, d)}(a \given s) - p^k _{\pi^k _1(s, d)}(a \given s)\right],
\end{align}
where (i) follows by adding and subtracting term $p_{\pi^k _1(s, d)}(a \given s)\left(c_e(s, a) + \sum_{s'\in \Scal} p^k(s' \given s, a) \cdot \vktwok(s', d')\right)$, (ii) follows from the fact that $c_e(s, a) + \sum_{s'\in \Scal} p^k(s' \given s, a) \cdot \vktwok(s', d') \leq L$, since, by assumption, $c_e(s,a)+c_c(d)+c_x(d,d') < 1$ for all $s\in \Scal, a\in \Acal$ and $d,d' \in \Dcal$.. Similarly, (iii) follows by adding and subtracting $p(s' \given s, a) \vktwok(s', d')$, and (iv) follows from the fact that $\vktwok \leq L$. By assumption, both $P_{\Dcal}$ and $P_{\Dcal}^k$ lie in the confidence set $\Pcal^k_{\Dcal}(\delta)$, so
\begin{align}
\label{eq:conf-ineq}
    \sum_{a \in \Acal} \left[p_{\pi^k _1(s, d)}(a \given s) - p^k _{\pi^k _1(s, d)}(a \given s)\right] \leq \min\{1, \beta_{\Dcal}^k(s, d'=\pi^k_1(s, d), \delta)\}
\end{align}
Similarly,
\begin{align}
\label{eq:conf-ineq2}
    \sum_{s'\in \Scal} \left[p(s' \given s, a) - p^k(s' \given s, a)\right] \leq \min\{1, \beta^k(s, a, \delta)\}
\end{align}
If we combine Eq.~\ref{eq:conf-ineq} and Eq.~\ref{eq:conf-ineq2} in Eq.~\ref{eq:value-diff}, for all $s\in \Scal$, it holds that
\begin{align}
    \vkone(s, d) - \vkonek(s, d) &\leq \EE_{a \sim p_{\pi^k _1(s, d)}(.|s), s' \sim p(.|s, a)}\left[\vktwo(s', d') - \vktwok(s', d')\right]\nn\\
    &\quad + L\EE_{a \sim p_{\pi^k _1(s, d)}(.|s)}\left[\min\{1, \beta^k(s, a, \delta)\}\right]\nn\\
    &\quad + L \left[\min\{1, \beta_{\Dcal}^k(s, d'=\pi^k_1(s, d), \delta)\}\right]
\end{align}
Similarly, for all $s \in \Scal,\, d \in \Dcal$ we can show
\begin{align}
    \vktwo(s, d) - \vktwok(s, d) &\leq \EE_{a\sim p_{\pi^k _1(s, d)}(.|s), s' \sim p(.|s, a)}\left[\vkthree(s', \pi_2(s, d)) 
    - \vkthreek(s', \pi_2(s, d))\right]\nn\\
    &\quad + L\EE_{a\sim p_{\pi^k _1(s, d)}(.|s)}\left[\min\{1, \beta^k(s, a, \delta)\}\right]\nn\\
    &\quad + L \left[\min\{1, \beta_{\Dcal}^k(s, \pi^k_2(s, d), \delta)\}\right]
\end{align}
Hence, by induction we have
\begin{align}
    \vkone(s, d) - \vkonek(s, d) \leq L \EE\left[\sum_{t=1}^L\min\{1, \beta^k_{\Dcal}(s_t, d_t, \delta)\} + \sum_{t=1}^L\min\{1, \beta^k(s_t, a_t, \delta)\}|s_1 = s, d_0 = d)\right]
\end{align}
where the expectation is taken over the MDP with policy $\pi^k$ under true transitions $P_{\Dcal}$ and $P$.
\end{proof}

\begin{lemma}
\label{lemma:cntbound}
Let $\Wcal$ be a finite set and $c$ be a constant. For $k \in [K]$, suppose $\Tau_k = (w_{k, 1}, w_{k, 2}, \dots, w_{k, H})$ is a random variable with distribution $P(.|w_{k, 1})$, where $w_{k, i} \in \Wcal$. Then, 
\begin{align}
    \sum_{k=1}^K\EE_{\Tau_k \sim P(.|w_{k, 1})}\left[\sum_{t=1}^H \min\{1, \frac{c}{\sqrt{\max\{1, N_k(w_{k, t})}\}}\}\right] \leq 2H|\Wcal| + 2\sqrt{2} c \sqrt{|\Wcal|KH} 
\end{align}
with $N_k(w) := \sum_{j=1}^{k-1}\sum_{t=1}^H\II(w_{j, t} = w)$.
\end{lemma}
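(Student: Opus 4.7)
The plan is to bound the quantity inside the expectation pointwise (i.e., for every realization of the trajectories $\Tau_1,\dots,\Tau_K$), which is stronger than bounding the expectation and allows us to ignore the stochasticity entirely. Concretely, I will show that for any fixed sequence of trajectories,
\[
S := \sum_{k=1}^K \sum_{t=1}^H \min\!\left\{1,\; \frac{c}{\sqrt{\max\{1, N_k(w_{k,t})\}}}\right\} \;\leq\; 2H|\Wcal| + 2\sqrt{2}\, c\, \sqrt{|\Wcal|\, K H},
\]
after which the claim follows by taking expectations of the constant bound.

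The first step is to re-index the sum by the element visited: $S = \sum_{w\in\Wcal} S_w$ with $S_w$ collecting the contributions of all visits to $w$. For a fixed $w$, label its visits $1,2,\dots,n_w$ in the order they occur across episodes, and let $n_i$ denote the value of $N_k(w)$ at the episode containing the $i$-th visit. The key structural observation is that $N_k(w)$ is frozen within an episode and updates only across episodes; since at most $H$ visits to $w$ happen inside a single episode, the $i$-th visit satisfies $n_i \geq \max\{0, i - H\}$. This is the only nontrivial combinatorial fact needed; intuitively, the ``stale count'' within an episode can lag the true count by at most $H$.

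Using this bound and monotonicity of $x \mapsto c/\sqrt{\max\{1,x\}}$, each summand in $S_w$ is controlled by $\min\{1, c/\sqrt{\max\{1, i - H\}}\}$. I will then split the sum over $i$ at $i = 2H$: the first $2H$ visits each contribute at most $1$, giving $\leq 2H$, while for $i > 2H$ one has $\max\{1, i-H\} = i - H \geq H$, so the term is at most $c/\sqrt{i-H}$ and the tail sum $\sum_{i=2H+1}^{n_w} c/\sqrt{i-H}$ is bounded by the standard integral estimate $\int_0^{n_w} dx/\sqrt{x} = 2\sqrt{n_w}$ (possibly times a $\sqrt{2}$ absorbed by a slightly looser split). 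This yields $S_w \leq 2H + 2\sqrt{2}\, c\, \sqrt{n_w}$.

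The final step is to sum over $w$ and apply Cauchy--Schwarz to $\sum_w \sqrt{n_w} \leq \sqrt{|\Wcal|}\,\sqrt{\sum_w n_w} = \sqrt{|\Wcal|\, K H}$, using that the total number of visits satisfies $\sum_w n_w = KH$. Combining gives the desired bound. The main obstacle, and essentially the only one, is the structural observation $n_i \geq i - H$: everything else is routine splitting and an integral comparison. If that inequality is mishandled, one tends to pick up an extraneous factor of $H$ (as happens when one naively applies the classical $\sum_i 1/\sqrt{i} \leq 2\sqrt{n}$ directly to the ordered sequence of counts $N_k(w)$, forgetting that counts repeat up to $H$ times within an episode).
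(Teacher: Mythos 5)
Your proof is correct and follows essentially the same route as the paper's: the same structural fact (the episodic count $N_k(w)$ can lag the running visit count by at most $H$ within an episode) yields the $2H|\Wcal|$ term, and the tail is handled by the same integral comparison followed by $\sum_{w}\sqrt{n_w}\leq\sqrt{|\Wcal|KH}$. The only (harmless) difference is that you bound the sum pointwise for each realization and apply Cauchy--Schwarz, whereas the paper carries the expectation throughout and invokes Jensen's inequality at the corresponding step.
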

\begin{proof}
The proof is adapted from~\cite{osband2013more}.
We first note that
\begin{align}
\label{eq:one-ineq}
\EE\left[\sum_{k=1}^{K} \sum_{t = 1} ^{H}  \min\{1, \frac{c}{\sqrt{\max\{1, N_k(w_{k, t})}\}}\} \right] &= \EE\left[\sum_{k=1}^{K} \sum_{t = 1} ^{H} \II(N_{k}(w_{k, t}) \leq H) \min\{1, \frac{c}{\sqrt{\max\{1, N_k(w_{k, t})}\}}\}\right]\nn\\
&\quad+ \EE\left[\sum_{k=1}^{K} \sum_{t = 1} ^{H} \II(N_{k}(w_{k, t}) > H) \min\{1, \frac{c}{\sqrt{\max\{1, N_k(w_{k, t})}\}}\} \right]\nn\\
& \leq  \EE\left[\sum_{k=1}^{K} \sum_{t = 1} ^{H} \II(N_{k}(w_{k, t}) \leq H) \cdot 1 \right]\nn\\
&\quad + \EE\left[\sum_{k=1}^{K} \sum_{t = 1} ^{H} \II(N_{k}(w_{k, t}) > H) \cdot \frac{c}{\sqrt{N_k(w_{k, t})}}\right]
\end{align}
Then, we bound the first term of the above equation
\begin{align}
\EE \left[\sum_{k=1}^{K} \sum_{t = 1}^{H} \II(N_{k}(w_{k, t}) \leq H) \right] &=  \EE\left[\sum_{w \in \Wcal} \{\text{\# of times $w$ is observed and $N_k(w) \leq H$}\}\right] \nn \\
& \leq \EE\left[ |\Wcal| \cdot 2H\right] = 2H|\Wcal| \label{help-ineq1}
\end{align}
To bound the second term, we first define $n_{\tau}(w)$ as the number of times $w$ has been observed in the first $\tau$ steps, \ie, if we are at the $t^{\text{th}}$ index of trajectory $\Tau_k$, then $\tau=t_k+t$, 
where $t_k = (k-1)H$, and note that
\begin{align}
n_{t_k+t}(w) \leq N_{k}(w) + t
\end{align}
because we will observe $w$ at most $t\in\{1, \ldots ,H\}$ times within trajectory $\Tau_k$. Now, if $N_k(w) > H$, we have that
\begin{align}
n_{t_k+t}(w) + 1 \leq N_{k}(w) + t + 1 \leq N_{k}(w) + H + 1 \leq 2N_k(w) .
\end{align}
Hence we have,
\begin{align}
\label{eq:ineq-help}
\II(N_{k}(w_{k, t}) > H)(n_{t_k + t}(w_{k, t})+1) \leq 2N_{k}(w_{k, t}) \implies \frac{\II(N_{k}(w_{k, t}) > H)}{N_{k}(w_{k, t})} \leq \frac{2}{n_{t_k + t}(w_{k, t})+1}
\end{align}
Then, using the above equation, we can bound the second term in Eq. \ref{eq:one-ineq}:
\begin{align}
\label{eq:help-ineq2}
\EE \left[\sum_{k=1}^{K} \sum_{t = 1} ^{H} \II(N_{k}(w_{k, t}) > H) \frac{c}{\sqrt{N_k(w_{k, t})}}\right] 
& = \EE\left[\sum_{k=1}^{K} \sum_{t = 1}^{H}  c\sqrt{\frac{\II(N_{k}(w_{k, t}) > H)}{N_k(w_{k, t})}}\right]\nn\\ 
&\ost{(i)}{\leq}\sqrt{2}c\EE\left[\sum_{k=1}^{K} \sum_{t = 1} ^{H} \sqrt{\frac{1}{n_{t_k + t}(w_{k, t}) + 1}}\right],
\end{align}
where (i) follows from Eq.~\ref{eq:ineq-help}.\\
Next, we can further bound  $\EE\left[\sum_{k=1}^{K} \sum_{t = 1} ^{H} \sqrt{\frac{1}{n_{t_k + t}(w_{k, t}) + 1}}\right]$ as follows:
\begin{align}
\label{eq:jens}
\EE\left[\sum_{k=1}^{K} \sum_{t = 1} ^{H} \sqrt{\frac{1}{n_{t_k + t}(w_{k, t}) + 1}}\right]
& = \EE\left[\sum_{\tau=1}^{KH} \sqrt{\frac{1}{n_{\tau}(w_\tau) + 1}}\right]\nn\\
& \ost{(i)}{=} \EE\left[\sum_{w\in\Wcal} \sum_{\nu=0}^{N_{K+1}(w)} \sqrt{\frac{1}{\nu + 1}}\right]\nn\\
&= \sum_{w\in\Wcal} \EE\left[\sum_{\nu=0}^{N_{K+1}(w)} \sqrt{\frac{1}{\nu+ 1}}\right]\nn\\
& \leq \sum_{w\in\Wcal} \EE\left[\int_{1}^{N_{K+1}(w) + 1} \sqrt{\frac{1}{x}}dx\right]\nn\\
& \leq \sum_{w\in\Wcal} \EE\left[2\sqrt{N_{K+1}(w)}\right]\nn\\
& \ost{(ii)}{\leq} \EE\left[2\sqrt{|\Wcal|\sum_{w\in\Wcal} N_{K+1}(w)}\right] \ost{(iii)}{=} \EE\left[2\sqrt{|\Wcal|KH}\right] = 2\sqrt{|\Wcal|KH},
\end{align}
where (i) follows from summing over different $w \in \Wcal$ instead of time and from the fact that we observe each $w$ exactly $N_{K+1}(w)$ times after $K$ trajectories,
(ii) follows from Jensen'{}s inequality and (iii) follows from the fact that $\sum_{w\in\Wcal} N_{K+1}(w) = KH$.
Next, we combine Eqs~\ref{eq:help-ineq2} and \ref{eq:jens} to obtain
\begin{align}
\label{help-ineq3}
& \EE \left[\sum_{k=1}^{K} \sum_{t = 1} ^{H}
\II(N_{k}(w_{k, t}) > H) \frac{c}{\sqrt{N_k(w_{k, t})}}\right] \leq \sqrt{2}c \times 2\sqrt{|\Wcal|KH} = 2\sqrt{2}c \sqrt{|\Wcal|KH}
\end{align}
Further, we plug in Eqs. \ref{help-ineq1} and~\ref{help-ineq3} in Eq.\ref{eq:one-ineq} 
\begin{align}
\EE\left[\sum_{k=1}^{K} \sum_{\tau = 1}^{H} \min\{1, \frac{c}{\sqrt{\max\{1, N_k(w_{k, t})}\}}\}\} \given \right] \leq 2H|\Wcal| + 2\sqrt{2} c \sqrt{|\Wcal|KH}
\end{align}
This concludes the proof.
\end{proof}

\newcommand{\Setx}{\Wcal}
\newcommand{\elementx}{w}
\begin{lemma}
\label{lemma:p2lm}
Let $\Setx$ be a finite set and $\Pcal_t(\delta) := \{p: \forall \elementx \in \Setx,\, ||p(.|\elementx) - \hat{p}_t(.|\elementx)||_1 \leq\beta_t(\elementx, \delta)\}$ be a $|\Setx|$-rectangular confidence set over probability distributions $p^*(.|\elementx)$ with $m$ outcomes, where $\hat{p}_t(.|\elementx)$ is the empirical estimation of $p^*(.|\elementx)$. Suppose at each time $\tau$, we observe an state $\elementx_\tau=\elementx$ and sample from $p^*(.|\elementx)$. If $\displaystyle\beta_t(\elementx, \delta) = \sqrt{\frac{2 \log \left(\frac{t^7|\Setx|2^{m+1}}{\delta}\right)}{\max\{1, N_t(\elementx)\}}}$ with $N_t(\elementx) = \sum_{\tau=1}^t \II(\elementx_\tau = \elementx)$, then the true distributions $p^*$ lie in the confidence set $\Pcal_t(\delta)$ with probability at least $1 - \frac{\delta}{2t^6}$.
\end{lemma}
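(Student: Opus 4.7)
The plan is to combine Weissman's $L^1$ concentration inequality for empirical distributions with a union bound over states and sample counts. Recall Weissman's inequality: for $n$ i.i.d.\ draws from a distribution over $m$ outcomes with empirical distribution $\hat{q}_n$, one has $\Pr(\|\hat{q}_n - q\|_1 \geq \epsilon) \leq 2^m \exp(-n\epsilon^2/2)$. This is the only probabilistic input; everything else is bookkeeping.

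First I would fix $w \in \Setx$ and condition on the event $N_t(w) = n$ for some $n \in \{1, \ldots, t\}$. Conditionally on the indices $\tau \leq t$ at which $w_\tau = w$, the $n$ corresponding observations are i.i.d.\ draws from $p^*(\cdot \given w)$, so Weissman yields
\[
\Pr\bigl(\|\hat{p}_t(\cdot \given w) - p^*(\cdot \given w)\|_1 \geq \beta_t(w, \delta) \bigm| N_t(w) = n\bigr) \leq 2^m \exp\bigl(-n\,\beta_t(w,\delta)^2/2\bigr).
\]
Plugging in the stated choice $\beta_t(w,\delta) = \sqrt{2\log(t^7 |\Setx|\, 2^{m+1}/\delta)/n}$ makes the exponent cancel the logarithm, so the right-hand side collapses to $2^m \cdot \delta / (t^7 |\Setx|\, 2^{m+1}) = \delta / (2 t^7 |\Setx|)$, a bound that no longer depends on $n$.

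Next I would take a union bound over the $|\Setx|$ choices of $w$ and the $t$ possible positive values of $N_t(w)$, which yields $\Pr(p^* \notin \Pcal_t(\delta)) \leq |\Setx| \cdot t \cdot \delta / (2 t^7 |\Setx|) = \delta / (2 t^6)$, exactly matching the claim. The constants $t^7$ and $2^{m+1}$ inside the logarithm are engineered precisely so that, after absorbing the $|\Setx| \cdot t$ from the union bound and the $2^m$ penalty from Weissman, one is left with exactly $\delta/(2t^6)$. The corner case $N_t(w)=0$ contributes nothing: the $L^1$ distance between any two probability vectors is at most $2$, while $\beta_t(w,\delta)$ evaluated at $N_t(w)=0$ equals $\sqrt{2\log(t^7 |\Setx|\, 2^{m+1}/\delta)}$, which exceeds $2$ for any non-degenerate parameter regime, so $p^*(\cdot \given w)$ automatically lies in the confidence set when no samples have been observed. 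The only real obstacle is keeping this bookkeeping of the constants straight; no deeper technique is required, and the argument is a direct specialization of the $L^1$ confidence-set construction in~\citet{jaksch2010near} to our $|\Setx|$-rectangular setting over state-conditional distributions.
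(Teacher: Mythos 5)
Your proposal is correct and follows essentially the same route as the paper's proof: Weissman's $L^1$ concentration bound $2^m \exp(-n\epsilon^2/2)$, a union bound over the $|\Setx|$ states and the $t$ possible positive values of $N_t(\elementx)$, and the observation that the $N_t(\elementx)=0$ case is vacuous because $\beta_t$ then exceeds the maximal $L^1$ distance of $2$. The constant bookkeeping ($t^7$, $2^{m+1}$) matches the paper's calculation exactly.
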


\begin{proof}
We adapt the proof from Lemma 17 in~\cite{jaksch2010near}.
We note that,
\begin{align*}
\prb(p^{*} \not \in \Pcal_t)& \ost{(i)}{=} \prb\left(\bigcup_{\elementx \in \Setx}\norm{p^{*}(\cdot \given \elementx) - \hat{p}_t(\cdot \given \elementx)} \geq \beta_{t}(\elementx, \delta)\right)\\
 &\ost{(ii)}{\leq} \sum_{\elementx\in \Setx} \prb\left(\norm{p^{*}(\cdot \given \elementx) - \hat{p}_t(\cdot \given \elementx)} \geq \sqrt{\frac{2 \log \left(\frac{t^7|\Setx|2^{m+1}}{\delta}\right)}{\max\{1, N_t(\elementx)\}}}\right)\\
& \ost{(iii)}{\leq}  \sum_{\elementx\in \Setx}\sum_{n=0}^{t} \prb\left(\norm{p^{*}(\cdot \given \elementx) - \hat{p}_t(\cdot \given \elementx)} \geq \sqrt{\frac{2 \log \left(\frac{t^7|\Setx|2^{m+1}}{\delta}\right)}{\max\{1, n\}}}\right),
\end{align*}
where (i) follows from the definition of the confidence set, \ie, the probability distributions do not lie in the confidence set if there is at least one state $\elementx$ in which $\norm{p^{*}(\cdot \given \elementx) - \hat{p}(\cdot \given \elementx)} \geq \beta_{t}(\elementx, \delta)$, 
(ii) follows from the definition of $\beta_{t}(\elementx, \delta)$ and a union bound over all $\elementx\in \Setx$ and (iii) follows from a union bound over all possible values of $N_t(\elementx)$.
To continue, we split the sum into $n=0$ and $n > 0$:
\begin{align*}
 \sum_{\elementx\in \Setx}\sum_{n=0}^{t} \prb&\left(\norm{p^{*}(\cdot \given \elementx) - \hat{p}_t(\cdot \given \elementx)}  \geq \sqrt{\frac{2 \log \left(\frac{t^7|\Setx|2^{m+1}}{\delta}\right)}{\max\{1, n\}}}\right)\nn\\
 & = \sum_{\elementx\in \Setx} \sum_{n=1}^{t} \prb\left(\norm{p^{*}(\cdot \given \elementx) - \hat{p}_t(\cdot \given \elementx)} \geq \sqrt{\frac{2 \log \left(\frac{t^7|\Setx|2^{m+1}}{\delta}\right)}{n}}\right)\nn\\
 &\quad + \overbrace{\sum_{\elementx\in \Setx} \prb\left(\norm{p^{*}(\cdot \given \elementx) - \hat{p}_t(\cdot \given \elementx)} \geq \sqrt{2 \log \left(\frac{t^7|\Setx|2^{m+1}}{\delta}\right)}\right)}^{\text{if } n = 0}\nn\\
& \ost{(i)}{=} \sum_{\elementx\in \Setx}  \sum_{n = 1}^{t}  \prb\left(\norm{p^{*}(\cdot \given \elementx) - \hat{p}_t(\cdot \given \elementx)} \geq \sqrt{\frac{2 \log \left(\frac{t^7|\Setx|2^{m+1}}{\delta}\right)}{n}}\right) + 0\\
& \ost{(ii)}{\leq} t|\Setx|2^{m} \exp\left( \log\left(-\frac{t^7|\Setx|2^{m+1}}{\delta}\right) \right) \leq \frac{\delta}{2t^6},
\end{align*}
where (i) follows from the fact that
$\norm{p^{*}(\cdot \given \elementx) - \hat{p}_t(\cdot \given \elementx)} < \sqrt{2 \log \left(\frac{t^7|\Setx|2^{m+1}}{\delta}\right)}$ for non-trivial cases. More specifically,
\begin{align}
& \delta < 1,\, t \geq 2 \implies \sqrt{2 log\left(\frac{t^7|\Setx|2^{m+1}}{\delta}\right)} > \sqrt{2\log  (512)} > 2,\nn\\
& \norm{p^{*}(\cdot \given \elementx) - \hat{p}_t(\cdot \given \elementx)} \leq \sum_{i \in [m]} \left(p^{*}(i\given s) + \hat{p}_t(i \given \elementx)\right) \leq 2,
\end{align}
and (ii) follows from the fact that, after observing $n$ samples,
the $L^1$-deviation of the true distribution $p^{*}$ from the empirical one $\hat{p}$ over $m$ events is bounded by:
\begin{align}
\label{ineq: l1}
&\prb\left(\norm{p^{*}(\cdot) - \hat{p}(\cdot)} \geq \epsilon\right) \leq 2^{m} \exp\left({-n\frac{\epsilon^2}{2}}\right)
\end{align}
\end{proof}

\vspace{-1cm}
%
\begin{lemma} \label{lem:opt-problem}
\label{lp_help}
Consider the following minimization problem:
\begin{equation} \label{eq:opt-problem}
\begin{aligned}
\underset{\xb}{\text{minimize}} & \quad \sum_{i=1}^m x_i w_i \\
\text{subject to} & \quad \sum_{i=1}^m | x_i - b_i | \leq d,\; \sum_{i} x_i = 1,\\
& \quad x_i \geq 0 \,\, \forall i \in \{1, \ldots, m\},
\end{aligned}
\end{equation}
where $d \geq 0$, $b_i \geq 0 \,\, \forall i \in \{1, \ldots, m\}$, $\sum_{i} b_i  = 1$ and $0 \leq w_1\leq w_2 \dots \leq w_m$.
Then, the solution to the above minimization problem is given by:
\begin{align}
\begin{split}
x^{*}_i = 
\begin{cases}
\, \min \{1, b_1 + \frac{d}{2}\} & \mbox{if} \,\, i = 1 \\
\, b_i & \mbox{if} \,\, i>1 \,\text{and}\,\sum_{l = 1}^i x_l \leq 1\\
\, 0 & \mbox{otherwise.}
\end{cases}
\end{split}
\end{align}
\end{lemma}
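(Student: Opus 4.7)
The plan is to exploit the linear-programming structure of \eqref{eq:opt-problem} together with a mass-transfer (exchange) argument. First, I would reparameterize by setting $\Delta_i = x_i - b_i$: since $\sum_i b_i w_i$ is constant and $\sum_i \Delta_i = 0$, the objective reduces to minimizing $\sum_i \Delta_i w_i$ over a bounded polytope defined by $\sum_i |\Delta_i| \leq d$, $\Delta_i \geq -b_i$, and $\sum_i \Delta_i = 0$, so the minimum is attained at a vertex.

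Next, I would establish the key exchange property: at any optimal $x^*$, no pair of indices $i, j$ can satisfy $x_i^* > b_i$, $x_j^* < b_j$, and $w_i > w_j$. Indeed, transferring an amount $\varepsilon > 0$ of mass from $x_i^*$ to $x_j^*$ preserves the sum constraint and, for $\varepsilon$ small enough, weakly decreases $\sum_l |x_l - b_l|$ (both $|x_i^* - b_i|$ and $|x_j^* - b_j|$ shrink linearly in $\varepsilon$), while strictly decreasing the objective by $\varepsilon(w_i - w_j) > 0$, a contradiction. Hence, since $w_1 \leq w_2 \leq \cdots \leq w_m$, the ``receivers'' $\{i : x_i^* > b_i\}$ lie among the smallest-weight indices and the ``donors'' $\{i : x_i^* < b_i\}$ among the largest.

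From here I can pin down $x_1^*$: because $x_1$ is the unique receiver (up to weight ties, which can be resolved by choosing a vertex), $x_1^* - b_1 = \sum_{i > 1}(b_i - x_i^*)$, so $\sum_l |x_l^* - b_l| = 2(x_1^* - b_1)$, and the L1 budget gives $x_1^* \leq b_1 + d/2$; combined with the feasibility bound $x_1^* \leq 1$ implied by $\sum x_l = 1$ and $x_l \geq 0$, minimizing the objective forces $x_1^* = \min\{1, b_1 + d/2\}$, since any smaller value of $x_1^*$ would have to be compensated by increasing some $x_j^*$ with larger weight. The remaining mass $1 - x_1^*$ should then be distributed among $x_2, \ldots, x_m$ so as to place as much as possible on the smallest weights, which (again by the exchange argument) amounts to keeping $x_i^* = b_i$ for $i = 2, 3, \ldots$ as long as the cumulative sum stays below $1$, and zeroing out the rest---exactly the case-split asserted in the statement.

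The main obstacle I anticipate is handling the boundary index cleanly: at the first $i$ where the running total $\min\{1, b_1 + d/2\} + b_2 + \cdots + b_i$ crosses $1$, a fractional residual appears that is neither $b_i$ nor $0$, and the three-case formula must be interpreted with this residual in mind. A secondary technicality is ties among the $w_i$'s, where the receiver/donor sets are not unique; both issues are minor and can be dispatched by invoking the vertex characterization of the LP optimum together with a careful reindexing.
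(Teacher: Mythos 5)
Your plan is correct in outline, but it takes a genuinely different route from the paper's proof. The paper argues by contradiction against an arbitrary competitor: it supposes some feasible $x'$ achieves a strictly smaller objective, looks at the \emph{first} index $j$ at which $x'$ differs from $x^*$, asserts that necessarily $x'_j > x^*_j$, and then shows in the two cases $j=1$ and $j>1$ that the budget $\sum_i |x'_i - b_i| \le d$ must be violated. You instead derive the structure of any optimizer directly via a mass-transfer exchange argument (receivers carry the smallest weights, donors the largest), use $\sum_i (x_i - b_i) = 0$ to convert the $L^1$ budget into the one-sided bound $x^*_1 - b_1 \le d/2$, and then fill in the remaining coordinates greedily. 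Your route is longer but more informative: it explains \emph{why} the optimum has this greedy form, is explicit about ties among the $w_i$, and generalizes to other orderings; the paper's route is shorter but leans on the terse (though correct) claim that the first differing coordinate must increase. Two remarks. First, to nail down that index $1$ is the \emph{unique} receiver you need a second kind of exchange --- transferring mass between two indices that are both at or above their $b$-values, which leaves the $L^1$ norm unchanged while strictly decreasing the objective --- since your stated exchange property only forbids a high-weight receiver paired with a low-weight donor; you essentially invoke this when arguing that any smaller $x^*_1$ must be compensated at a larger weight, but it should be made explicit. Second, your observation about the boundary index is a genuine catch: as written, the three-case formula need not satisfy $\sum_i x^*_i = 1$, because the index at which the running sum first crosses $1$ should receive the residual $1 - \sum_{l<i} x^*_l$ rather than $0$; the paper's proof glosses over this even though its own \textsc{ExtendedValueIteration} pseudocode implements the residual correctly.
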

%

\begin{proof}
Suppose there is $\{x'_i;\; \sum_i x'_i = 1,\, x'_i \geq 0\}$ such that $\sum_i x'_iw_i < \sum_i x^*_iw_i$. Let $j \in \{1, \dots, m\}$ be the first index where $x'_j \neq x^*_j$, then it's clear that $x'_j > x^*_j$.\\
If $j = 1$:
\begin{align}
\sum_{i = 1}^m |x'_i - b_i| = |x'_1 - b_1| + \sum_{i = 2}^m |x'_i - b_i| > \frac{d}{2} + \sum_{i=2}^m b_i - x'_i = \frac{d}{2} + x'_1 - b_1 > d
\end{align}
If $j > 1$:
\begin{align}
\sum_{i = 1}^m |x'_i - b_i| = |x'_1 - b_1| + \sum_{i = j}^m |x'_i - b_i| > \frac{d}{2} + \sum_{i=j+1}^m b_i - x'_i > \frac{d}{2} + x'_1 - b_1 = d
\end{align}
Both cases contradict the condition $\sum_{i=1}^m |x'_i - b_i| \leq d$.
\end{proof}
\begin{lemma}
\label{lem:bellman}
For the value function $V^\pi_{t|P_{\Dcal}, P}$ defined in Eq.~\ref{eq:value_function}, we have that:
\begin{align}
V^\pi_{t|P_{\Dcal}, P}(s, d)=  c_{\pi_t(s,d)}(s, d) + \sum_{a\in \Acal} p_{\pi_t(s,d)}(a|s) \cdot \left(c_e(s, a) + \sum_{s' \in \Scal} p(s' \given s, a) \cdot V^\pi_{t+1|P_{\Dcal}, P}(s', \pi_t(s,d))\right) \label{eq:bellman}
\end{align}
 \end{lemma}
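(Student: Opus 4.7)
The plan is to derive the recursion directly from the definition of $V^\pi_{t|P_{\Dcal}, P}(s,d)$ in Eq.~\ref{eq:value_function} by peeling off the time-$t$ summand and recognizing the remainder as an expectation of $V^\pi_{t+1|P_{\Dcal}, P}$ via the Markov property of the 2-layer MDP. Concretely, I would condition on $s_t = s$ and $d_{t-1} = d$ and split the sum $\sum_{\tau=t}^{L}[c_e(s_\tau,a_\tau)+c_c(d_\tau)+c_x(d_\tau,d_{\tau-1})]$ into the $\tau=t$ term plus the tail $\sum_{\tau=t+1}^{L}$. Because the switching policy is deterministic, the switch chosen at time $t$ is the constant $d_t = \pi_t(s,d)$, so $c_c(d_t)+c_x(d_t,d_{t-1})$ equals the non-random quantity $c_{\pi_t(s,d)}(s,d)$ and pulls out of the expectation.

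Next, I would use the law of total expectation over the two-stage randomness at step $t$: first $a_t \sim p_{\pi_t(s,d)}(\cdot\given s)$, giving the environment cost $\sum_a p_{\pi_t(s,d)}(a\given s)\, c_e(s,a)$, and then $s_{t+1} \sim p(\cdot\given s, a_t)$. Conditioning on the realized pair $(s_{t+1}, d_t) = (s', \pi_t(s,d))$, the conditional expectation of the tail $\sum_{\tau=t+1}^{L}[\cdots]$ is by definition equal to $V^\pi_{t+1|P_{\Dcal}, P}(s',\pi_t(s,d))$, since the argument in the second slot of $V^\pi$ plays the role of the previous switch value. Assembling the fixed cost, the expectation over $a_t$, and the further expectation over $s_{t+1}$ gives exactly Eq.~\ref{eq:bellman}.

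This is a routine unrolling and I do not expect any real obstacle: the Markov property of the 2-layer MDP is immediate from the way the transition kernels $P_{\Dcal}$ and $P$ are defined, and linearity of expectation handles the rest. The only point that requires care is bookkeeping the index convention---namely that the second argument of $V^\pi_t$ represents $d_{t-1}$, so when advancing to $V^\pi_{t+1}$ that slot must be filled with $d_t = \pi_t(s,d)$ rather than with $d$. The boundary value $V^\pi_{L+1|P_{\Dcal},P}(s,d) = 0$ is consistent with the empty-sum convention and serves as the base case if one wished to reinterpret the identity as the inductive step of a backward induction.
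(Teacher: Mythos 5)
Your proof is correct, but it takes a more elementary route than the paper's. The paper does not unroll the definition in Eq.~\ref{eq:value_function} directly; instead it first invokes the standard (un-proved, textbook) Bellman equation for the augmented single-layer MDP $(\Scal\times\Dcal,\Dcal,\bar P,\bar C,L)$, writing $V^\pi_{t|P_{\Dcal},P}(s,d)=\bar c(s,d)+\sum_{s'}p(s',\pi_t(s,d)\given s,d)\,V^\pi_{t+1|P_{\Dcal},P}(s',\pi_t(s,d))$, and then substitutes the definitions of $\bar c$ from Eq.~\ref{eq:cdefgen} and of the augmented transition kernel from Eq.~\ref{eq:dynamics-standard-mdp} to expose the two-stage structure over $a$ and $s'$. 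Your argument instead proves that Bellman step from first principles: peel off the $\tau=t$ summand, use determinism of $\pi_t$ to extract $c_{\pi_t(s,d)}(s,d)$, and apply the tower property over $a_t\sim p_{\pi_t(s,d)}(\cdot\given s)$ and $s_{t+1}\sim p(\cdot\given s,a_t)$ together with the Markov property to identify the tail with $V^\pi_{t+1|P_{\Dcal},P}(s',\pi_t(s,d))$. The two derivations land on the same identity; yours is self-contained and makes explicit the one genuinely delicate point (that the second slot of $V^\pi_{t+1}$ must be filled with $d_t=\pi_t(s,d)$, not $d$), while the paper's is shorter because it delegates the peeling step to the known Bellman recursion for the flattened MDP and reduces the lemma to bookkeeping of $\bar c$ and $\bar P$.
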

\begin{proof}
\begin{align}
V^\pi_{t|P_{\Dcal}, P}(s, d) &\ost{(i)}{=} \bar{c} (s, d)  + \sum_{s'\in \Scal} p(s',\pi_t(s,d)|(s, d)) V^\pi_{t+1|P_{\Dcal}, P}(s', \pi_t(s,d))\nn
\end{align}
\begin{align}
& \ost{(ii)}{=}  \sum_{a\in \Acal} p_{\pi_t(s,d)}( a\given s)\cost(s, a) + c_c(\pi_t(s,d)) + c_x(\pi_t(s,d), d) +  \sum_{s'\in \Scal}  \sum_{a \in \Acal} p(s' \given s, a) p_{\pi_t(s,d)}(a \given s) V^\pi_{t+1|P_{\Dcal}, P}(s', \pi_t(s,d))\nn\\
& \ost{(iii)}{=} c_{\pi_t(s,d)}(s, d) + \sum_{a\in \Acal} p_{\pi_t(s,d)}(a|s) \cdot \left(c_e(s, a) + \sum_{s' \in \Scal} p(s' \given s, a) \cdot V^\pi_{t+1|P_{\Dcal}, P}(s', \pi_t(s,d))\right), \label{eq:bellman}
\end{align}
where (i) is the standard Bellman equation in the standard MDP defined with dynamics~\ref{eq:dynamics-standard-mdp} and costs~\ref{eq:cdefgen}, (ii) follows by replacing $\bar{c}$ and $p$ with equations ~\ref{eq:dynamics-standard-mdp} and ~\ref{eq:cdefgen}, and (iii) follows by $ c_{d'}(s, d) = c_c(d') + c_x(d', d)$.
\end{proof}

\begin{lemma}\label{lem:minlemma}
 $\min\{T,a+b\} \le \min\{T,a\}+ \min\{T,b\}$\ \  for $T,a,b \ge 0$.
\end{lemma}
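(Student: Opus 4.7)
The plan is to prove this elementary inequality by a simple case analysis on whether $a$ and $b$ each lie below the threshold $T$ or above it. Since all three quantities are nonnegative, the behavior of $\min\{T,\cdot\}$ is piecewise linear with two regimes, and checking both regimes for each of $a$ and $b$ will suffice. I do not anticipate any real obstacle here; the proof is essentially a two-line check.

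First, suppose $a < T$ and $b < T$. Then by definition $\min\{T,a\} = a$ and $\min\{T,b\} = b$, so the right-hand side equals $a+b$. Since $\min\{T,a+b\} \leq a+b$ trivially, the inequality holds in this case.

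Second, suppose at least one of $a,b$ is $\geq T$; without loss of generality assume $a \geq T$. Then $\min\{T,a\} = T$, and since $\min\{T,b\} \geq 0$ we obtain
\begin{equation*}
\min\{T,a\} + \min\{T,b\} \geq T \geq \min\{T,a+b\},
\end{equation*}
where the last inequality uses that $\min\{T,\cdot\}$ is bounded above by $T$. Combining the two cases covers all possibilities, completing the proof. The only minor subtlety is remembering to handle the boundary case $a = T$ or $b = T$, which falls into the second case.
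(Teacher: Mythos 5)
Your proof is correct and takes essentially the same approach as the paper's own proof, namely a direct case analysis on the positions of $a$ and $b$ relative to $T$. Your two-case split (both of $a,b$ below $T$ versus at least one at or above $T$) is in fact slightly more economical than the paper's four-case enumeration after ordering $a\le b$, but the underlying elementary argument is the same.
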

\begin{proof}
 Assume that $a\le b \le a+b$.
 Then,
 \begin{equation}
  \min\{T,a+b\} = \left\{\begin{array}{c} 
                        T\le a+b = \min\{T,a\}+ \min\{T,b\} \quad \text{if}\ \ a \le b\le T \le a+b\\
                        T\le a+T = \min\{T,a\}+ \min\{T,b\} \quad \text{if}\ \ a \le T\le b \le a+b \\
                        T\le 2T = \min\{T,a\}+ \min\{T,b\} \quad \text{if}\ \ T\le a \le b \le a+b\\
                        a+b  = \min\{T,a\}+ \min\{T,b\} \quad \text{if}\ \ a \le b \le a+b \le T
                        \end{array} \right.
\end{equation}
\end{proof}

\clearpage
\newpage

\section{Implementation of UCRL2 in finite horizon setting}\label{app:ucrl2}
\begin{algorithm}[h]
\small
\caption{Modified UCRL2 algorithm for a finite horizon MDP $\Mcal= (\Scal, \Acal, P, C, L)$.}
  \begin{algorithmic}[1]
  \label{alg:ucrl2}
    \REQUIRE Cost $C = [c(s, a)]$, confidence parameter $\delta \in (0, 1)$.
        
    \STATE $(\{N_k(s, a)\}, \{N_k(s, a, s')\}) \gets \textsc{InitializeCounts}{()}$ 
      
     \FOR{$k = 1, \ldots, K$}
     	\FOR{$ s, s' \in \Scal, a \in \Acal$}
     		\STATE \mbox{\bf if} $N_k(s, a) \neq 0$ \mbox{\bf then} $\hat{p}_{k}(s'|s, a) \gets \dfrac{N_k(s, a, s')}{N_k(s, a)}$	     	\mbox{\bf else} $\hat{p}_{k}(s'|s, a) \gets \frac{1}{|\Scal|}$

			\STATE $\displaystyle{\beta_k(s, a, \delta) \gets \sqrt{\frac{14|\Scal|\log\left(\frac{2(k - 1)L|\Acal||\Scal|}{\delta}\right)}{\max\{1, N_k(s, a)\}}}}$
		\ENDFOR
		\STATE $\pi^k \gets \textsc{ExtendedValueIteration}{(\hat{p}_{k}, \beta_k, C)}$
		\STATE $s_0\gets \textsc{InitialConditions}{()}$
		\FOR {$t=0, \ldots, L-1$}
			\STATE Take action $a_t = \pi^k_t(s_t)$, and observe next state $s_{t+1}$.
			\STATE $N_k(s_t, a_t) \gets N_k(s_t, a_t) + 1$
			\STATE $N_k(s_t, a_t, s_{t + 1}) \gets N_k(s_t, a_t, s_{t + 1}) + 1$
		\ENDFOR
	\ENDFOR
        \STATE \mbox{\bf Return} $\pi^{K}$
  \end{algorithmic}
\end{algorithm}

\begin{algorithm}[h]
\small
\caption{It implements \textsc{ExtendedValueIteration}, which is used in Algorithm~\ref{alg:ucrl2}.}
\label{alg:evi}
  \begin{algorithmic}[1]
    \REQUIRE Empirical transition distribution $\hat{p}(.|s, a)$, cost $c(s, a)$, and confidence interval $\beta(s, a, \delta)$.
    
    \STATE $\pi \gets \textsc{InitializePolicy}()$,\quad $v\gets \textsc{InitializeValueFunction}()$
    \STATE $n \gets |\Scal|$
    
   	\FOR{$t = T-1, \ldots, 0$}
   		\FOR{$s \in \Scal$}
   			\FOR{$a \in \Acal$}
   				\STATE $s'_1, \ldots, s'_{n} \gets \textsc{Sort}(v_{t+1})$ \hfill \# $v_{t + 1}(s'_1) \leq \ldots \leq v_{t + 1}(s'_n)$
   				
   				\STATE $p(s'_1) \gets \min\{1, \hat{p}(s'_1|s, a) + \frac{\beta(s, a, \delta)}{2}\}$
   				
   				\STATE $p(s'_i) \gets \hat{p}(s'_i|s, a)\; \forall\, 1 < i \leq n$
   				\STATE $l \gets n$
   				\WHILE{$\sum_{s'_i \in \Scal} p(s'_i) > 1$}
   				\STATE $p(s'_l) = \max\{0, 1 - \sum_{s'_i\neq s'_l} p(s'_i)\}$
   				\STATE $l \gets l - 1$
   				\ENDWHILE
		   		\STATE $q(s, a) = c(s, a) + \EE_{s' \sim p}\left[v_{t+1}(s')\right]$
		   	\ENDFOR
		   	\STATE $v_t(s) \gets \min_{a \in \Acal}\{q(s, a)\}$
			\STATE $\pi_t(s) \gets \arg\min_{a \in \Acal}\{q(s, a)\}$
   		\ENDFOR
   	\ENDFOR
   	\STATE \mbox{\bf Return} $\pi$
  \end{algorithmic}
\end{algorithm}

\newpage
\section{Distribution of cell types and traffic levels in the lane driving environment}\label{app:env-dist}
\begin{table}[h]
\centering
\caption{Probability of cell types based on traffic level.}
\centering
\small
\begin{tabular}{lcccc}
\hline
& \texttt{road} & \texttt{grass} &\texttt{stone} & \texttt{car}\\
\hline
\texttt{no-car}& 0.7& 0.2& 0.1 & 0\\
\texttt{light}& 0.6 & 0.2& 0.1 & 0.1\\
\texttt{heavy} & 0.5& 0.2& 0.1 & 0.2
\end{tabular}
\label{table:cell-type-probs}
\end{table}
\begin{table}[h]
\caption{Probability of traffic levels based on the previous row.}
\centering
\small
\begin{tabular}{lccc}
\hline
& \texttt{no-car} & \texttt{light} &\texttt{heavy}\\
\hline
\texttt{no-car}& 0.99& 0.01& 0\\
\texttt{light}& 0.01& 0.98& 0.01\\
\texttt{heavy} & 0& 0.01& 0.99
\end{tabular}
\label{table:traffic-level-probs}
\end{table}

\section{Performance of the human and machine agents in obstacle avoidance task} \label{app:performance}
\begin{figure}[h!]
     \centering
     \subfloat[$\gamma_0 = \texttt{no-car}$]{\includegraphics[scale=1.5]{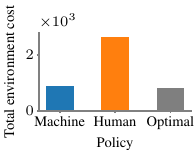}} \hspace{5mm}
     \subfloat[$\gamma_0 \in \{ \texttt{light}, \texttt{heavy} \}$]{\includegraphics[scale=1.5]{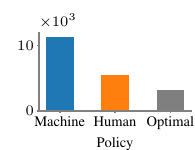}}
\caption{Performance of the machine policy, a human policy with $\sigma_{\hum} = 2$, and the optimal policy in terms of total cost. 
In panel (a), the episodes start with an initial traffic level $\gamma_0 = \texttt{no-car}$ and, in panel (b), the episodes start with an initial traffic
level $\gamma_0 \in \{ \texttt{light}, \texttt{heavy}\}$.
} 
\label{fig:pre-trained-machine}
\end{figure}

\section{The amount of human control for different initial traffic levels}\label{app:hum-control}
\begin{figure}[h!]
\centering
\includegraphics[scale=0.9]{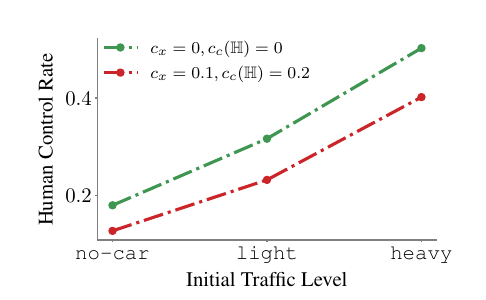}
\caption{The amount of human control rate using UCRL2-MC switching algorithm for different initial traffic levels. For each traffic level, we sample 500 environment and average the human control rate over them. Higher traffic level results in more human control, as the human agent is more reliable in heavier traffic.  }
\end{figure}
\newpage

\section{Additional Experiments} \label{app:additional-exp}
In this section, we run additional experiments in the RiverSwim environment to investigate the effect of action space size and the number of agents in a team on the total regret.

\subsection{Action space size}
To study the effect of action space size on the total regret, we artificially increase the number of actions by planning $m$ steps ahead. More concretely, we consider a new MDP, where each time step consists of $m$ steps of the original RiverSwim MDP, and the switching policy decides for all the $m$ steps at once. The number of actions in the new MDP increases to $2^m$, while the state space remains unchanged. We consider a setting with a single team of two agents with $p=0$ and $p=1$, i.e., one agent always takes action \texttt{right} and the other takes \texttt{left}. We run the simulations for $20{,}000$ episodes with $m = \{1, 2, \cdots, 4\}$, i.e., with the action size of $2, 4, 8, 16$. Each experiment is repeated for $5$ times. We compare the performance of our algorithm against UCRL2 in terms of total regret. Figure~\ref{fig:ratio-action} (a) summarizes our results; The performance of UCRL2-MC gets worse by increasing the number of actions as the regret bound directly depends on the action size (Theorem~\ref{thm:total-regret}). However, the regret ratio still remains within the same scale even after doubling the number of actions. One reason is that our algorithm only needs to learn \textit{the actions taken by the agents} to find the optimal switching policy. If the agents' policies include a small subset of actions, our algorithms will maintain a small regret bound even in environments with huge action space. Therefore, we believe a more careful analysis can improve our regret bound by making it a function of agents' action space instead of the whole action size.
\begin{figure}[t]
\centering
\subfloat[]{\includegraphics{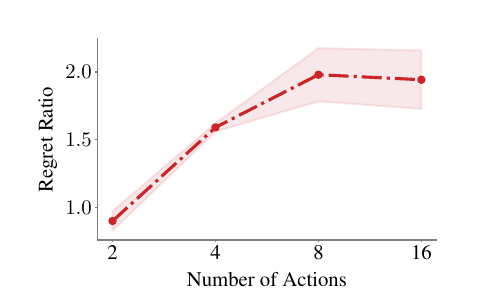}}
\subfloat[]{\includegraphics{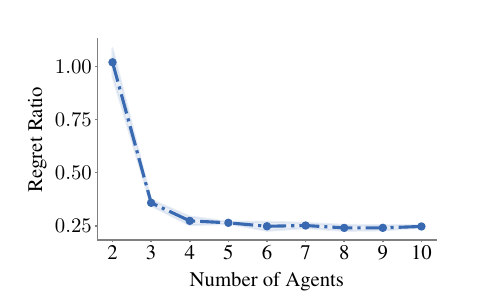}}
\caption{Ratio of UCRL2-MC regret to UCRL2 for (a) a set of action sizes and (b) different numbers of agents. By increasing the action space size, the performance of UCRL2-MC gets worse but remains within the same scale. In addition, UCRL2-MC outperforms UCRL2 in environments with a larger number of agents.}
\label{fig:ratio-action}
\end{figure}

\subsection{Number of agents}
Here, our goal is to examine the impact of the number of agents on the total regret achieved by our algorithm. To this end, we consider the original RiverSwim MDP (i.e., two actions) with a single team of $n$ agents, where we run our simulations for $n = \{3, 4, \cdots, 10\}$ and $20{,}000$ episodes for each $n$. We choose $p$, i.e., the probability of taking action \texttt{right} for $n$ agents as $\{0, \frac{1}{n-1}, \cdots, \frac{n-2}{n-1}, 1\}$. As shown in Figure~\ref{fig:ratio-action} (b), UCRL2-MC outperforms UCRL2 as the number of agents increases. This agrees with Theorem~\ref{thm:total-regret}, as our derived regret bound mainly depends on the action space size $|\mathcal{A}|$, while the UCRL2 regret bound depends on the number of agents $|\mathcal{D}|$.

\label{sec:appendix}

\end{document}